\definecolor{myred}{HTML}{FF435D}
\definecolor{mymagenta}{HTML}{D443FF}
\definecolor{myblue}{HTML}{7143FF}
\definecolor{mycyan}{HTML}{43CCFF}
\tikzstyle{startstop} = [rectangle, rounded corners, minimum width=2.5cm, minimum height=0.8cm, text centered, draw=black, fill=myred!50]
\tikzstyle{process} = [rectangle, minimum width=2.5cm, minimum height=0.8cm, text centered, draw=black, fill=myblue!50]
\tikzstyle{decision} = [diamond, minimum width=2.5cm, minimum height=0.8cm, text centered, draw=black, fill=mycyan!50]
\tikzstyle{arrow} = [thick,->,>=stealth]
\newtheorem{theorem}{Theorem}[section]
\newtheorem{lemma}[theorem]{Lemma}
\begin{document}

\title{Diversity-Preserving Exploitation of Crossover}

\author{Johannes Lengler \\
\small ETH Z\"{u}rich \\ \small Department of Computer Science
\and
Tom Offermann \\
\small ETH Z\"{u}rich\\ \small Department of Computer Science
}

\date{} 

\newcommand{\lb}{\left(}
\newcommand{\rb}{\right)}
\newcommand{\lbr}{\left[}
\newcommand{\rbr}{\right]}
\newcommand{\lc}{\left\{}
\newcommand{\rc}{\right\}}
\newcommand{\eps}{\varepsilon}
\newcommand{\cond}{\mid}
\newcommand{\bin}{\text{Bin}}
\newcommand{\geo}{\mathcal G}
\newcommand{\E}[1]{\mathbb{E} \lbr #1 \rbr}
\newcommand{\EE}{\mathbb{E}}
\newcommand{\RR}{\mathbb{R}}
\newcommand{\calB}{\mathcal{B}}
\newcommand{\calE}{\mathcal{E}}
\newcommand{\calL}{\mathcal{L}}
\newcommand{\Eh}[1]{\hat \mathbb{E} \lbr #1 \rbr}
\newcommand{\dega}{\textsc{DEGA}\xspace}
\newcommand{\OM}{\textsc{OM}}
\newcommand{\LO}{\textsc{LO}}
\newcommand{\NO}{\textsc{NO}}
\newcommand{\onemax}{\textsc{OneMax}\xspace}
\newcommand{\leadingones}{\textsc{Leading\-Ones}\xspace}
\newcommand{\jump}{\textsc{Jump}\xspace}
\newcommand{\ooea}{\ensuremath{(1+1)\textsc{-EA}}\xspace}
\newcommand{\tpoga}{\ensuremath{(2+1)\textsc{-GA}}\xspace}
\newcommand{\todega}{\ensuremath{(2+1)\textsc{-DEGA}}\xspace}
\newcommand{\ollga}{(\ensuremath{1+(\lambda,\lambda))}\text{-GA}\xspace}
\newcommand{\UMDA}{\textsc{UMDA}\xspace}
\newcommand{\TODO}[1]{\textcolor[rgb]{0.9,0.2,0.2}{TODO: #1}}
\newcommand{\bound}[1]{\textcolor[rgb]{0.2,0.2,0.9}{#1}} 
\newcommand{\question}[1]{\textcolor[rgb]{0.2,0.7,0.2}{Q: #1}}

\newcommand{\jl}[1]{\textbf{\textcolor{purple}{JL: #1}}}
\newcommand{\tof}[1]{\textbf{\textcolor{cyan}{TOF: #1}}}

\theoremstyle{remark}%
\newtheorem{remark}{Remark}%

\maketitle

\begin{abstract}
Crossover is a powerful mechanism for generating new solutions from a given population of solutions. Crossover comes with a discrepancy in itself: on the one hand, crossover usually works best if there is enough diversity in the population; on the other hand, exploiting the benefits of crossover reduces diversity. This antagonism often makes crossover reduce its own effectiveness. 

We introduce a new paradigm for utilizing crossover that reduces this antagonism, which we call \emph{diversity-preserving exploitation of crossover (DiPEC)}. The resulting \emph{Diversity Exploitation Genetic Algorithm (\dega)} is able to still exploit the benefits of crossover, but preserves a much higher diversity than conventional approaches. 

We demonstrate the benefits by proving that the \todega finds the optimum of \leadingones with $O(n^{5/3}\log^{2/3} n)$ fitness evaluations. This is remarkable since standard genetic algorithms need $\Theta(n^2)$ evaluations, and among genetic algorithms only some artificial and specifically tailored algorithms were known to break this runtime barrier. We confirm the theoretical results by simulations. Finally, we show that the approach is not overfitted to \leadingones by testing it empirically on other benchmarks and showing that it is also competitive in other settings. We believe that our findings justify further systematic investigations of the DiPEC paradigm.
\end{abstract}

\textbf{Keywords:} Genetic Algorithm, Runtime Analysis, Diversity, Crossover, Mutation Mask

\section{Introduction}\label{sec:intro}
Crossover, the idea of recombining two or more solutions into a new one, is a key ingredient of genetic algorithms~\cite{vcrepinvsek2013exploration,whitley2019next}. Crossover often works best when there is enough diversity in the population~\cite{dang2017escaping,sudholt2017crossover,opris2024tight,doerr2024runtime,cerf2024population}, see also the survey in~\cite{sudholt2020benefits}. In this case, crossover may serve as an exploitation mechanism which is able to find solutions that are fitter than the current population~\cite{vcrepinvsek2013exploration}. However, as usual for exploitation mechanism this comes at a cost for population diversity. In this paper we suggest the following new paradigm for utilizing crossover, which maintains some of the benefits of crossover without destroying so much diversity. \smallskip 

\noindent\textbf{The new paradigm: diversity-preserving exploitation of cross\-over (DiPEC).} For illustration, consider optimization of pseudo-Boolean functions $f:\{0,1\}^n \to \mathbb{R}$ with uniform crossover. For two parents $x^1$ and $x^2$, uniform crossover creates an offspring $y$ by selecting each bit randomly from either $x^1$ or $x^2$. Consider the case that $y$ replaces $x^1$ or $x^2$ in the population. Then the Hamming distance of the new pair $\{x^1,y\}$ or $\{x^2,y\}$ is in expectation only half as large as the Hamming distance of the previous pair $\{x^1,x^2\}$. Hence, diversity of this pair, measured by its Hamming distance, is roughly cut in half, which constitutes a massive loss of diversity. In this paper we will focus on population size $2$, where this is the normal case, and the diversity of the whole population is cut in half. However, the same principle also applies to larger population sizes.

In order to explain the new DiPEC paradigm, let us consider the example above and rephrase it. For concreteness, assume that the result of crossover $y$ is fitter than $x^1$, and let $m := x^1 \oplus y$ be the bit-wise xor of $x^1$ and $y$. A way to interpret the situation is that $m$ is a \emph{fitness-improving mask} for $x^1$: if we start from $x^1$ and \emph{apply the mask $m$}, i.e., flip all bits in $x^1$ where the mask has a one-bit, then this yields the search point $x^1\oplus m = y$ of higher fitness. So a re-interpretation of the situation is that we have found a fitness-improving mask for $x^1$, where the one-bits in the mask encode bit flips to $x^1$. The problem of this mask is that applying it destroys diversity by decreasing the Hamming distance from $x^1$ to $x^2$, roughly by a factor of $2$.

The key idea is that in such a situation, the mask $m$ can often be replaced by a smaller mask, meaning a mask that has fewer one-bits. This is based on the assumption that not all bit flips in $m$ are equally important. Often, a small number of \emph{critical bit flips} in $m$ is responsible for all or most of the fitness improvement, while many other bit flips may contribute little. The key insight is that in such a case, it may be possible to find those critical bit flips efficiently by \emph{subsampling} the mask $m$. I.e., we randomly compute another mask $m'$ where we independently keep each one-bit with some probability $1/\lambda$, while everything else is a zero-bit. Then we apply the mask $m'$ to $x^1$, yielding an offspring $y' := x^1\oplus m'$. Equivalently, $y'$ can be obtained as a \emph{biased crossover}\footnote{This is not related to the concept of unbiased operators introduced by Lehre and Witt~\cite{lehre2012black}. The biased crossover used here is still an \emph{unbiased} operator in their sense.} between $x^1$ and $y$, where we take each bit from $y$ with probability $1/\lambda$, and from $x^1$ with probability $1-1/\lambda$. Then $y'$ has a decent chance of $1/\lambda$ to contain at least the most critical bit flip from the mask $m$. By repeating the biased crossover $\lambda$ times, we have a good (constant) chance of generating at least one biased crossover that contains the most critical bit flip from the mask $m$, and hopefully the resulting offspring $y'$ is still strictly fitter than $x^1$.

Crucially, the Hamming distance $H(x^1,y')$ is much smaller than the Hamming distance $H(x^1,y)$. Hence, if we replace $x^1$ by $y'$ then we destroy much less diversity than if we replace $x^1$ by $y$. Quantitatively, while we lose in expectation a factor $1/2$ with $y$, i.e., $\EE[H(x^2,y)] = \tfrac12 H(x^1,x^2)$, we only lose a factor of $1-\tfrac{1}{2\lambda}$ with $y'$, i.e., $\EE[H(x^2,y')] = (1-\tfrac1{2\lambda}) H(x^1,x^2)$. This is a massive difference for preserving diversity, and we will show in this paper that it is often a worthwhile trade-off. Hence we suggest not to consider $y$ for inclusion into the population, but the fittest biased crossover offspring $y'$ out of $\lambda$ trials instead. The trade-off in a good (and hopefully typical) situation looks as follows.
 \begin{itemize}
     \item The fitness of $y'$ is almost as large as the fitness of $y$.
     \item Including $y'$ destroys very little diversity, whereas including $y$ would massively reduce diversity.
     \item This comes at the cost of $1+\lambda$ function evaluations (of $y$ and of $\lambda$ candidates for $y'$) instead of just $1$. The cost can be reduced further, as we will discuss later.
 \end{itemize}
As we will discuss later, the idea of using biased crossover to extract the most beneficial bit has been used before in the \ollga~\cite{doerr2015black}, but in a very different situation and with different aim.\smallskip

\noindent\textbf{Our contribution.}
Apart from introducing the DiPEC paradigm, in this paper we make some first steps of exploring it. Even though the DiPEC paradigm can be applied to arbitrary population sizes, and to arbitrary pairs of search points of different fitness, we focus here on population size $\mu=2$ and on $(2+1)$ algorithms. This already gives us strong effects on \leadingones and allows us to give rigorous proofs without too much technical overhead, but is certainly only a first step. We call the resulting algorithm the $(2+1)$ Diversity Exploitation Genetic Algorithm or $(2+1)$ DEGA, where we often omit the $(2+1)$ for brevity. As our main result, we analyze the $(2+1)$ \dega on the \leadingones benchmark, see Section~\ref{sec:prelim} for a definition. This is a promising benchmark for the \dega since progress is often hard, and typically happens in small steps. This means that for every improvement there is typically a single bit flip which is responsible for a substantial part of the improvement. We show by runtime analysis that the \dega finds the optimum with $\EE[T] = O(\lambda n + n^2\log n/\sqrt{\lambda})$ function evaluations.\footnote{We use $\log n$ for the natural logarithm with base $e$.} For a wide range of $\lambda$, this gives a subquadratic runtime bound, in particular $\EE[T] = O(n^{5/3}(\log n)^{2/3})$ for $\lambda= (n\log n)^{2/3}$. This is remarkable because $\Theta(n^2)$ seems a strong barrier for other genetic algorithms on \leadingones, except for some artificial tailored algorithms. We will discuss those in more detail below, and also explain why we do not think that the \dega is overfit to \leadingones. 

As a second contribution, we will describe several realizations of a $(2+1)$ DEGA. One is kept as simple as possible, and we use this for our runtime analysis. It is designed to make understanding the principle as easy as possible, but lacks some natural algorithmic tricks that one would apply in practice. In a second step, we will describe some such amendments. They will not make a substantial difference for the performance on \leadingones, but will help the algorithm to be more robust, for example in the presence of very small fitness improvements, or in cases like the \jump function where the mutation mask from uniform crossover is irreducible.

Our third contribution is to provide runtime simulations. First we investigate \leadingones and confirm the theoretical runtime of $\tilde O(n^{5/3})$, where the tilde indicates that we omit log factors. The exact value of $\lambda$ does not seem to matter too much for \leadingones. Then we compare several versions of the \todega with other basic algorithms like the \tpoga, the \ollga and the \UMDA on a wider class of benchmarks: \textsc{OneMax}, \textsc{LeadingOnes}, \textit{Linear Functions} with harmonic weights and \textit{Maximum Independent Set (MIVS)}. Not surprisingly, the \dega is quite superior on \leadingones compared to the other algorithms that we test. But we find that the \dega is also generally competitive on other benchmarks, especially for moderate $\lambda = \log n$, though here it doesn't flatly outcompete all other algorithms. We emphasize that we did not try to optimize any hyperparameters of the \dega based on the simulation results (but neither for the other algorithms), except for showing several values of $\lambda$. This makes us confident that the \dega can provide benefits in many situations without a general decrease in performance.\smallskip

\noindent\textbf{Larger population sizes.} In this paper we focus on a small population size $\mu =2$, which already requires an interesting analysis. However, a natural next step will be to extend the approach to larger population sizes. Since larger population sizes are associated with larger diversities~\cite{lengler_diversity,opris2024tight}, it is tempting to assume that our paradigm loses strength for larger population sizes. However, we give a heuristic argument why this may not be the case.

First of all, why is larger population size associated with higher diversity? A nice answer to this question was given in~\cite{lengler_diversity}, where the population diversity with and without crossover was analyzed in the situation \emph{without selective pressure}, for a flat fitness function. The diversity was defined as the average Hamming distance in the population. Firstly, it was observed that crossover does not have any influence on the resulting population diversity. Secondly, the diversity at equilibrium was analyzed. This equilibrium diversity indeed increases with $\mu$, so diversity maxes out at a larger value for larger $\mu$. However, the assumption of a flat fitness function is crucial here. As we will argue below, exploitation (fitness improvements) decrease diversity. So the key question is not at what value the diversity maxes out if fitness improvements never occur. The relevant question is rather: if there are frequent fitness improvements that reduce diversity, \emph{how fast} does the diversity recover? It was shown in~\cite{lengler_diversity} that reducing diversity can be  much faster than gaining diversity. In the following, we try to quantify this effect by a thought experiment.

Consider the case of a diverse population in which one search point $x^1$ has a particularly important gene that the other individuals of the population do not have, for example obtained by a mutation. If $x^1$ was to spread its fitness advantage to the rest of the population, assume that it performs crossover with all the $\mu-1$ other individuals (not necessarily all in the same generation, but over time), each time transferring the gene and replacing the other parent with the offspring. We choose this setting, rather than a single generation, because the \emph{average} distance of the population from the optimum is reduced by one, and the population average gives arguably a good scaling when comparing different population sizes. In this setting, the Hamming distance of $x^1$ to all the new individuals is cut by half. But also, if $y^2$ and $y^3$ are the offspring replacing the parents $x^2$ and $x^3$ then their expected Hamming distance is cut by half as well $\EE[H(y^2,y^3)] = \tfrac{1}{2}\EE[H(x^2,x^3)]$. This is because for any bit in which $x^2$ and $x^3$ differ, one of them also differs from $x^1$, so the bit has a probability of $1/2$ to be lost during crossover. Hence, in this (admittedly simplistic) scenario, spreading a single new gene to the rest of the population costs $1/2$ of the total diversity, just as for the $\mu = 2$ case. We leave it to future work to validate or falsify this heuristic reasoning.\smallskip

\noindent\textbf{Related Work.} 
There is a vast amount of work showing the benefits of crossover. We refer to~\cite{sudholt2020benefits} for a comprehensive review of theoretical work and to~\cite{doerr2024runtime} for a discussion of more recent work, and only mention a few landmark results. Crossover can not only help for tailored benchmarks like hierarchical if-and-only-if~\cite{watson2001analysis} and royal-road functions~\cite{watson2007building,jansen2005real}, but also for natural applications like the closest string problem~\cite{sutton2021fixed} and for generic benchmarks like \jump~\cite{jansen2002analysis,whitley2018exploration}. While a main features of all these examples are local optima that crossover helps overcoming, crossover can also speed up hillclimbing tasks like \onemax~\cite{sudholt2012crossover,corus2020benefits} and \leadingones~\cite{cerf2024population}, albeit to a lesser extent. 

A common theme of many of these examples is that crossover works better the larger the population diversity is. Especially the extensive theoretical work on the \jump problem~\cite{jansen2002analysis,kotzing2011crossover,dang2017escaping,doerr2024runtime,opris2024tight} can be regarded as a quest for proving better guarantees for the population diversity, or for finding parameters or tweaks of the algorithms that increase diversity. In particular, explicit diversity-enhancing mechanisms have been shown to work extremely well in this situation~\cite{dang2017escaping}. However, the \jump function sidesteps a central dilemma of crossover: exploitation of crossover generally reduces diversity, as we have described above. For \jump, this does not play a role since the crossover step ends in the optimum, after which the algorithm stops. In practice, it seems more realistic that crossover should help many times, not just once. This is why our main focus is the \leadingones problem, where many equally hard steps follow each other, and crossover may be useful for each of them.

Introduced by Rudolph, the function \leadingones is one of the most classical theoretical benchmarks for evolutionary algorithms~\cite{rudolph1997convergence}. Even simple algorithms like random local search or the \ooea find the optimum with $\Theta(n^2)$ function evaluations~\cite{droste2002analysis}. However, this quadratic runtime bound turned out to be a sturdy barrier. Lehre and Witt could show that no unbiased\footnote{Despite its name our crossover operator is unbiased in the sense of Lehre and Witt.} mutation-based algorithm can optimize \leadingones in less than quadratic time~\cite{lehre2012black}, or in other words, that the \emph{unary unbiased black-box complexity} of \leadingones is $\Omega(n^2)$. Hence, for unbiased operators this barrier can only be broken by crossover and other recombination mechanisms. This was later extended by Doerr and Lengler to the $(1+1)$ elitist black-box complexity~\cite{doerr2017introducing}, which is also $\Omega(n^2)$ without the unbiasedness condition~\cite{doerr2018elitist}. For larger population sizes, it was known that $\Omega(n^2)$ is not a principled barrier. The binary unbiased black-box complexity of \leadingones is $O(n\log n)$~\cite{doerr2011faster}, and becomes even lower for higher arities~\cite{doerr2012black,afshani2019query}. In principle, the proofs in~\cite{doerr2011faster,doerr2012black,afshani2019query} give algorithms that optimize \leadingones fast. However, those algorithms are strongly tailored to \leadingones. Most of them are highly artificial and would immediately break when applied to a different problem. A notable exception is a basic algorithm in~\cite{doerr2011faster} with runtime $O(n\log n)$, which contains related ideas if viewed from the right angle. We will explain this in more detail below and also include a version of this algorithmic idea into our runtime simulation. However, the result from~\cite{doerr2011faster} has remained a purely theoretical result. The verbatim formulation from~\cite{doerr2011faster} only works for \leadingones, and we are not aware of attempts to turn the underlying idea into a practical general-purpose optimizer. To our best knowledge, the $\Omega(n^2)$ barrier has not been broken by any practical population-based algorithm.\footnote{This is to be understood for the class of all automorphic transformations of \leadingones, which allow different target strings and different orderings of the positions. Our runtime bounds of $\tilde O(n^{5/3})$ holds for this whole class. There are certainly algorithms which can optimize the \leadingones function itself in sub-quadratic time, for example, by starting with the all-ones string, or by testing the bits in sequential order. However, those algorithm do not transfer to automorphic transformations of \leadingones.}  In particular, it has been explicitly shown by Cerf and Lengler that diversity-enhancing mechanisms can improve the runtime of the standard $(2+1)$ GA by a constant factor, but not beyond the regime of $\Theta(n^2)$~\cite{cerf2024population}. However, the $\Theta(n^2)$ barrier \emph{has} been broken by the significance-based cGA~\cite{doerr2020significance}, which optimized \leadingones in time $O(n\log n)$. Despite its name, this algorithm is not a genetic algorithm and is not based on populations at all, but it is rather an Estimation-of-Distribution Algorithm which works with statistical properties of the fitness landscape. Together with our work, this may be an additional indication that population-based algorithms may still not have reached their full potential.

The idea of diversity-preserving exploitation of crossover bears some resemblance to the idea used in the \ollga~\cite{doerr2015black}. In that algorithm, $\lambda$ mutations of the same parent $x$ are generated, then the fittest $y$ of those $\lambda$ offspring is selected (which is usually less fit than the parent), and then $\lambda$ biased crossover between $x$ and $y$ are performed. This uses the same of idea of hoping that a biased crossover retains the most beneficial mutation. However, the idea is applied in a very different situation and with a very different aim. In the \ollga, $y$ is a mutation of $x$ that is typically less fit, and the aim is to find a new search point that is fitter than both $x$ and $y$. In our paradigm, $y$ is fitter than $x$, and we accept that the biased crossover may be less fit than $y$, as long as it is fitter than $x$ and has low Hamming distance of $x$ to preserve population diversity. Not surprisingly, practically the differences to the \ollga are very large, for example that it works with population size one. \smallskip

\noindent\textbf{Algorithms design from theoretical benchmarks.} Traditionally, the theory community in the field is very cautious with suggesting new algorithms from evidence on theoretical benchmarks. This has good reasons: there is a substantial risk of overfitting the algorithm to the benchmark. It is rather easy to write down an algorithm which works well on one specific benchmark like \leadingones, but which fails on other benchmarks. Therefore, most runtime analysis has concentrated on understanding algorithms that are already established in practical work, thus avoiding the overfitting problem. The community thus expects high standards of algorithmic suggestions that stem from runtime analyses. We propose an explicit formulation of such standards through the following conditions.
\begin{itemize}
    \item \emph{Generality:} The underlying mechanism must be general enough to plausibly work in a wide range of situations.
    \item \emph{Transparency:} It must be clear in which situation the mechanism gives improvement, and the reason why it gives improvements must be well-understood.
    \item \emph{Impact:} At least on one benchmark the improvement is substantial, and this should be clearly demonstrated by proof or by experiment. Ideally, the benchmark is not specifically designed for the problem, but has been studied before. 
\end{itemize}
Despite the general caution, there are several algorithmic ideas that have made their way from theory to algorithmic portfolios. This includes the \ollga~\cite{doerr2015black}, heavy-tailed mutation operators as in the so-called \emph{fast EAs}~\cite{doerr2017fast} and self-adjustment schemes like the one-fifth rule for discrete search spaces~\cite{doerr2021self}. We believe that our results for the diversity-preserving
exploitation of crossover introduced in this paper meets those standards as well, and that this justifies further systematic investigations of this paradigm. 


\section{The DEGA Algorithm}\label{sec:algo}
As announced, we first describe a simple version of the algorithm that we will use for the runtime analysis on \leadingones. This is designed to be a minimal algorithm to show improvement, with as few components as possible and simpler than the general DiPEC framework that we have described above.\footnote{We believe that it is generally beneficial to investigate such simplified algorithms. Even if the algorithms in practice contain a lot of additional components, this strategy makes it more transparent which components are responsible for the runtime improvement. The alternative is to use a complex algorithm together with extensive ablation studies.} 
In a second step we then list extensions that are likely beneficial for making the algorithm more robust and more widely applicable.

The simple \todega is described in \Cref{algo:dega}. It starts with two random antipodal search points.\footnote{It seems natural for the algorithm to start with maximal diversity. It simplifies the analysis, but we don't believe that it is crucial for the runtime result.} 
When the two search points in the population have the same fitness, the algorithm simply uses mutation to either find a better search point or to increase diversity, lines \ref{algoline:mutation}--\ref{algoline:end_of_mutation}.  The operator ``mutate'' in line \ref{algoline:mutation} is standard bit mutation with mutation rate $1/n$, i.e. it flips each bit independently with probability $1/n$. The ``SelectPopulation'' operator in line \ref{algoline:Select} selects for fitness, with diversity as a secondary criterion. In other words, the offspring $y$ is always selected if it is strictly fitter than the two (equally fit) old search points $x^1,x^2$, and always discarded when it is strictly less fit.  When $y$ has the same fitness as $x^1$ and $x^2$, the new population is the pair of points with largest Hamming distance among those $3$ points, with ties broken randomly.  

Whenever the two search points have different fitnesses then the algorithm uses the DiPEC paradigm to replace the less fit $x^1$ by a biased crossover offspring $y$ that has higher fitness, but small Hamming distance from $x^1$, thus preserving  most of the population diversity (which here is simply the Hamming distance from $x^2$). This is done by the operator Crossover($x^1, x^2, 1/\lambda$), which produces an offspring by taking every bit from $x^2$ with probability $1/\lambda$, and taking all other bits from $x^1$. For \leadingones we can implement the DiPEC paradigm in a simplified way compared to the introduction, lines \ref{algoline:DiPEC_start}--\ref{algoline:replacex1} of \Cref{algo:dega}. We describe a full version later that works on other functions as well. Since on \leadingones we eventually find a crossover offspring as desired, we can simply perform biased crossover until we find such an offspring. Secondly, we directly perform a biased crossover of $x^1$ and $x^2$ in line~\ref{algoline:crossover2}, which selects each bit from $x^1$ with probability $1-1/\lambda$ and from $x^2$ with probability $1/\lambda$. The alternative is to go through an intermediate unbiased crossover $y$ and check first whether that is fitter than the less fit parent. This step can be skipped for \leadingones since there it always succeeds with large probability $1/2$, but it should not be skipped in the general-purpose version of this algorithm, see below.

\begin{algorithm}
    \caption{Algorithm 1: \textbf{\todega}$(n, \lambda)$ for maximizing $f:\{0,1\}^n\to\RR$}\label{algo:dega}
    \begin{algorithmic}[1]
        \State $x \leftarrow$ Uniform($n, \frac 1 2$) 
        \State $P \leftarrow \{x, \bar{x}\}$

        \Repeat
        \State $x^1, x^2 \leftarrow P$
        \If{$f(x^1) = f(x^2)$}\label{algoline:start_of_mutation} \Comment{Enhance Diversity}
            \State $y =$ mutate($x^i$), $i \in \{1,2\}$ random\label{algoline:mutation}
            \State $P \leftarrow$ SelectPopulation($x^1,x^2,y$)\label{algoline:Select}\label{algoline:end_of_mutation}
        \Else \Comment{Exploitation}
            \If {$f(x^1) > f(x^2)$} \label{algoline:swap}
            \State swap $x^1,x^2$ \Comment{Ensure $f(x^1) < f(x^2)$}
            \EndIf
            \Repeat \label{algoline:DiPEC_start}
            \State $y =$ Crossover($x^1, x^2, 1/\lambda$)\label{algoline:crossover2}\Comment{Biased Crossover}
            \Until {$f(y) > f(x^1)$}\label{algoline:DiPEC_end}
            \State $x^1 \leftarrow y$ \label{algoline:replacex1}
        \EndIf
        \Until {termination criterion met}
    \end{algorithmic}
\end{algorithm}

In our analysis, we will split the run of the algorithm into \emph{diversity phases} for $f(x^1)=f(x^2)$ and \emph{exploitation phases} for $f(x^1) \neq f(x^2)$. During a diversity phase the \todega uses mutation, and such phase lasts until the first strict improvement is found. Note that selection favors diversity if there are fitness ties, measured by the Hamming distance of the two search points. Hence, the diversity can only increase during a diversity phase. An exploitation phase lasts until two search points of equal fitness are found, and no mutation is used in such a phase. Instead, for the less fit search point, which we assume to be $x^1$, the algorithm tries to find a fitter search point $z$ close to $x^1$ by biased crossover. Hence, the diversity can only \emph{decrease} during an exploitation phase. Note that it can happen that $x^1$ improves so much that it becomes fitter than $x^2$, in which case the exploitation phase continues. In fact, we will show that for \leadingones this is the most common case, and that a single exploitation phase consists of many improvements of both search points before reaching a population of equal fitness. Thus a diversity phase may consist of many runs through lines \ref{algoline:start_of_mutation}--\ref{algoline:end_of_mutation}, and an exploitation phase may consist of many runs through lines \ref{algoline:swap}--\ref{algoline:replacex1}.   \smallskip

\noindent\textbf{Amendments.} As previously described, the previous algorithm is designed to be a minimal example that is efficient on \leadingones. However, there are several extensions that are natural to make the algorithm robust and avoid obvious failure modes on other benchmarks. We have already mentioned that in general, we would not let the loop in line~\ref{algoline:DiPEC_start}--\ref{algoline:DiPEC_end} run indefinitely, but rather terminate it after a while, for example after at most $\lambda$ steps. Before even entering the loop, we should not skip the step of creating an \emph{unbiased} crossover $y$ of $x^1$ and $x^2$ and checking whether it is fitter than $x^1$. Only in that case should we start creating biased crossover $y'$ between $y$ and $x^1$. In fact, this idea can be iterated in order to reduce the cost of finding one offspring with the DiPEC paradigm from $\lambda+1$ to $O(\log \lambda)$. We describe this method as algorithm $A_{\text{BB}}$ in Section~\ref{sec:experiments}. However, while this algorithm works well on \leadingones, it does not show good performances on other benchmarks.

Apart from that, the decision to \emph{exclusively} use crossover in an exploitation phase works for \leadingones, but bears the risk of getting stuck when biased crossover does not work. Conversely, for general fitness landscapes crossover may also be beneficial if both parents have the same fitness. Hence, for a practical implementation we suggest that in any case the algorithm performs randomly either a mutation or a uniform crossover (or a combination of both). In case of a crossover $y$, the algorithm enters the exploitation phase only if $y$ is strictly fitter than the less fit offspring $x^1$ (where ties between $x^1$ and $x^2$ are broken randomly), in which case it performs $\lambda$ biased crossover between $x^1$ and $y$. We consider this the most crucial amendment to turn the \todega into a practical algorithm. We do not believe that this would change the runtime on \leadingones, but it would make the analysis more complicated.

There are some other extensions which seem natural, but we leave their exploration for future work. For continuous optimization, but also for discrete optimization with many fitness levels, the condition $f(x^1)=f(x^2)$ may not be satisfied often. Hence, it may make sense to replace this by a soft condition, for example that the difference $|f(x^1)-f(x^2)|$ is below its average value over the last $n$ generations. Likewise, the criterion $f(z) > f(x^1)$ for replacing $x^1$ in line~\ref{algoline:replacex1} may be too soft in such cases. Every replacement costs diversity, so we want to avoid replacements with negligible fitness improvements. Hence, we may replace the condition by a stronger one, 
for example that $z$ must retain at least $10\%$ of the fitness advantage that $x^2$ has over~$x^1$.  

Since it is not clear a priori how to choose the parameter $\lambda$, it would also be natural to adapt this parameter dynamically, for example by starting for each new population with a large $\lambda = \lambda_0$ (e.g., $\lambda_0 = n^{2/3}$), but if after $\lambda$ biased offspring no improvement of $x^1$ was found, then $\lambda$ is reduced by a factor of $1/2$. Mind that a smaller $\lambda$ increases the chances of keeping important bits from $x^2$, but costs more diversity. The whole scheme increases the runtime only by a constant factor of roughly $2$, but removes a parameter choice. Moreover, this variant also has a chance of eventually accepting an unbiased crossover and thus be more efficient on benchmarks like \jump, where uniform crossover is beneficial over biased crossover. 

Finally, the algorithm extends naturally to larger populations. In this case, we would choose two random parents for crossover, either uniformly or by methods like tournament selection. Then uniform crossover is performed, and if the offspring $y$ is fitter than the worse parent $x$ then $\lambda$ biased offspring between $x$ and $y$ are performed, the best of which may replace $x$. Instead, one could also replace an unrelated third search point $x'$, e.g., the least fit in the population, by the fittest of $H(y,x')$ biased crossovers between $y$ and $x'$. We leave exploration of larger populations to future work.

\section{Preliminaries}\label{sec:prelim}
The main result of this paper is on the \textsc{LeadingOnes} benchmark. The function \textsc{LeadingOnes} or $\LO: \{0, 1\}^n \to \{0,\ldots,n\}$ counts the number of initial consecutive 1s in a bit string \( x \in \{0, 1\}^n \), formally defined as
$\textsc{LO}(x) = \sum_{i=1}^{n} \prod_{j=1}^{i} x_j$.

\noindent The \emph{runtime} $T$ on \leadingones is the number of function evaluations until the optimum is found. We measure the \emph{time} generally by the number of fitness evaluations and define a \emph{generation} as the time period of creating and evaluating one search point (either in line~\ref{algoline:mutation} or in line~\ref{algoline:crossover2} of \Cref{algo:dega}). 
As explained in \Cref{sec:algo}, a run of the \todega on \leadingones alternates between diversity and exploitation phases, where the latter can cover many fitness improvements. Starting with $k=0$ we denote by $P_k =\{x^1_k, x^2_k\}$ the population at the beginning of the $k$-th phase, where $\LO(x^1_k) \le \LO(x^2_k)$. The $i$-th bit of $x^1_k$ is denoted as $(x^1_k)_i$, and similarly for other search points. Sometimes we also want to refer to variables at some time $t$ within a phase, and by overlay of notation we denote this by an index $t$, for example $P_t$ is the population in generation $t$. We will use the indices in a way that no confusion between the two options should arise.
 
At any point during a run we split the $n$ positions into an \emph{optimized} part and a \emph{non-optimized} part. The optimized part is given by the first $\LO(x^2) = \max \{ \LO(x^1), \LO(x^2)\}$ bits (i.e., optimized by the fitter search point $x^2$). The non-optimized part $\NO$ is given by the remaining bits, but in a diversity phase (if both search points have the same fitness) we exclude the bit at position $\LO(x^2)+1$, which we we call the \emph{critical bit}. Hence, $\NO=\{\LO(x^2)+1,\ldots,n\}$ in an exploitation phase and $\NO=\{\LO(x^2)+2,\ldots,n\}$ in a diversity phase. Throughout the paper we denote by $\alpha := |\NO|/n$ the fraction of the non-optimized part in the total string. 
%
%
%
As before, 
$\alpha_k$ is the value of $\alpha$ at the beginning of the $k$-th phase. We will focus specifically at the Hamming distance $H_k$ between the non-optimized parts of our two search points at the beginning of the $k$-th phase, as well as its complement,
\begin{align*}
    H_k := \sum\nolimits_{i \in \NO}\big(x_k^1\big)_i \oplus \big(x_k^2\big)_i \quad \text{and} \quad \bar H_k \coloneqq \alpha_k n - H_k,
\end{align*}
where $\oplus$ denotes XOR.
Note that the fraction $\alpha$ of the non-optimized part can only decrease over the course of a run. 


Finally we introduce some notation for different bits. We have already defined the \emph{critical bit} at position $\LO(x^2)+1$, where $\LO(x^2)\ge \LO(x^1)$. Note that this bit is always zero in $x^2$. In a diversity phase, since $\LO(x^1) = \LO(x^2)$ the critical bit is also zero in $x^1$. In an exploitation phase, the position of the first zero-bit in $x^1$ is called the \emph{improving position} or \emph{improving bit}. Note that this is a one-bit in $x^2$. In the non-optimized part of size $\alpha n$, a \emph{blocking bit} is a bit that is zero in both $x^1$ and $x^2$, and a \emph{skipping bit} is a bit that is one in both $x^1$ and $x^2$. These bits are also illustrated in \Cref{fig:bits}. We denote the number of blocking and skipping bits at the beginning of the $k$-th phase with $B_k$ and $S_k$ respectively. Note that $\bar H_k = B_k+S_k$. 
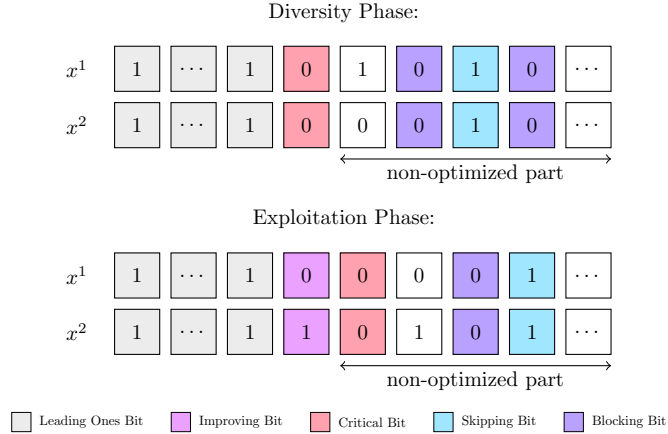
\begin{figure}[ht]
    \centering
    \begin{tikzpicture}[scale=1.0, every node/.style={scale=0.8}]
        \node at (3.75, 0.75) {Diversity Phase:};
        \node at (3.75, -2) {Exploitation Phase:};

        \node at (0.2, 0) {$x^1$};
        \node at (0.2, -0.75) {$x^2$};

        \draw [<->] (3.7,-1.2) -- (7.3,-1.2);
        \node at (5.5, -1.4) {non-optimized part};
        
        \node at (0.2, -2.75) {$x^1$};
        \node at (0.2, -3.5) {$x^2$};

        \draw [<->] (3.7,-3.95) -- (7.3,-3.95);
        \node at (5.5, -4.15) {non-optimized part};

        \foreach \i in {0, 0.5, 1} {
            \node[fill=gray!15, draw=none, rectangle, minimum size=0.75cm] at (\i*1.5 + 1, 0) {};
            \node[fill=gray!15, draw=none, rectangle, minimum size=0.75cm] at (\i*1.5 + 1, -0.75) {};
        }

        \foreach \i in {2.5,3.5} {
            \node[fill=myblue!50, draw=none, rectangle, minimum size=0.75cm] at (\i*1.5 + 1, 0) {};
            \node[fill=myblue!50, draw=none, rectangle, minimum size=0.75cm] at (\i*1.5 + 1, -0.75) {};
        }

        \foreach \i in {1.5} {
            \node[fill=myred!50, draw=none, rectangle, minimum size=0.75cm] at (\i*1.5 + 1, 0) {};
            \node[fill=myred!50, draw=none, rectangle, minimum size=0.75cm] at (\i*1.5 + 1, -0.75) {};
        }

        \foreach \i in {3} {
            \node[fill=mycyan!50, draw=none, rectangle, minimum size=0.75cm] at (\i*1.5 + 1, 0) {};
            \node[fill=mycyan!50, draw=none, rectangle, minimum size=0.75cm] at (\i*1.5 + 1, -0.75) {};
        }
        
        \foreach \i/\val in {0/1, 0.5/$\dots$, 1/1, 1.5/0, 2/1, 2.5/0, 3/1, 3.5/0, 4/$\dots$} {
            \node[draw, rectangle, minimum size=0.75cm] (b1d-\i) at (\i*1.5 + 1, 0) {\val};
        }

        \foreach \i/\val in {0/1, 0.5/$\dots$, 1/1, 1.5/0, 2/0, 2.5/0, 3/1, 3.5/0, 4/$\dots$} {
            \node[draw, rectangle, minimum size=0.75cm] (b2d-\i) at (\i*1.5 + 1, -0.75) {\val};
        }

        \foreach \i in {0, 0.5, 1} {
            \node[fill=gray!15, draw=none, rectangle, minimum size=0.75cm] at (\i*1.5 + 1, -2.75) {};
            \node[fill=gray!15, draw=none, rectangle, minimum size=0.75cm] at (\i*1.5 + 1, -3.5) {};
        }

        \foreach \i in {3} {
            \node[fill=myblue!50, draw=none, rectangle, minimum size=0.75cm] at (\i*1.5 + 1, -2.75) {};
            \node[fill=myblue!50, draw=none, rectangle, minimum size=0.75cm] at (\i*1.5 + 1, -3.5) {};
        }

        \foreach \i in {2} {
            \node[fill=myred!50, draw=none, rectangle, minimum size=0.75cm] at (\i*1.5 + 1, -2.75) {};
            \node[fill=myred!50, draw=none, rectangle, minimum size=0.75cm] at (\i*1.5 + 1, -3.5) {};
        }

        \foreach \i in {1.5} {
            \node[fill=mymagenta!50, draw=none, rectangle, minimum size=0.75cm] at (\i*1.5 + 1, -2.75) {};
            \node[fill=mymagenta!50, draw=none, rectangle, minimum size=0.75cm] at (\i*1.5 + 1, -3.5) {};
        }

        \foreach \i in {3.5} {
            \node[fill=mycyan!50, draw=none, rectangle, minimum size=0.75cm] at (\i*1.5 + 1, -2.75) {};
            \node[fill=mycyan!50, draw=none, rectangle, minimum size=0.75cm] at (\i*1.5 + 1, -3.5) {};
        }

        \foreach \i/\val in {0/1, 0.5/$\dots$, 1/1, 1.5/0, 2/0, 2.5/0, 3/0, 3.5/1, 4/$\dots$} {
            \node[draw, rectangle, minimum size=0.75cm] (b1e-\i) at (\i*1.5 + 1, -2.75) {\val};
        }

        \foreach \i/\val in {0/1, 0.5/$\dots$, 1/1, 1.5/1, 2/0, 2.5/1, 3/0, 3.5/1, 4/$\dots$} {
            \node[draw, rectangle, minimum size=0.75cm] (b2e-\i) at (\i*1.5 + 1, -3.5) {\val};
        }

        \node at (3.75, -4.7) { 
            \begin{tikzpicture}
                \matrix[matrix of nodes, nodes={align=left}, column sep=0.1cm] (m) {
                    \node {
                        \begin{tikzpicture}[baseline=(b.center)]
                            \node[rectangle, minimum size=0.5cm, fill=gray!15, draw] (b) {};
                            \node[right=0.35cm] {Leading Ones Bit};
                        \end{tikzpicture}
                    }; &
                    \node {
                        \begin{tikzpicture}[baseline=(b.center)]
                            \node[rectangle, minimum size=0.5cm, fill=mymagenta!50, draw] (b) {};
                            \node[right=0.35cm] {Improving Bit};
                        \end{tikzpicture}
                    }; &
                    \node {
                        \begin{tikzpicture}[baseline=(b.center)]
                            \node[rectangle, minimum size=0.5cm, fill=myred!50, draw] (b) {};
                            \node[right=0.35cm] {Critical Bit};
                        \end{tikzpicture}
                    }; &
                    \node {
                        \begin{tikzpicture}[baseline=(b.center)]
                            \node[rectangle, minimum size=0.5cm, fill=mycyan!50, draw] (b) {};
                            \node[right=0.35cm] {Skipping Bit};
                        \end{tikzpicture}
                    }; & 
                    \node {
                        \begin{tikzpicture}[baseline=(b.center)]
                            \node[rectangle, minimum size=0.5cm, fill=myblue!50, draw] (b) {};
                            \node[right=0.35cm] {Blocking Bit};
                        \end{tikzpicture}
                    }; \\
                };
            \end{tikzpicture}
        };
    \end{tikzpicture}
    \caption{Different bits during optimization}
    \label{fig:bits}
\end{figure}

Due to the symmetries of \leadingones as discussed in~\cite{cerf2024population}, for a given Hamming distance of $x^1$ and $x^2$ in the non-optimized part, $\bar H_k$ positions in $\NO$ in which $x^1$ and $x^2$ coincide are located uniformly at random.  
More precisely, the following statements have been formally proven in~[2, Corollary~7] for all positions except for the critical bit in the exploitation phase, which we cover in Lemma \ref{lem:critical} below. For a given Hamming distance of $x^1$ and $x^2$ in the non-optimized part, $\bar H_k$ positions in $\NO$ in which $x^1$ and $x^2$ coincide are located uniformly at random. Moreover, except for the critical bit in an exploitation phase, the non-optimized part of $x^1$ and $x^2$ are distributed uniformly at random (not independent between $x^1$ and $x^2$) among all strings of length $|\NO|$. In particular, for $x^2$ each of those positions has independently a probability of exactly $1/2$ to be a zero-bit. In particular, when a diversity phase results in a fitness improvement then each following bit has independently a chance of $1/2$ to also be a one-bit, also called a \emph{free rider}. Hence, the increase in fitness is dominated by a distribution $1+\geo(1/2)$, where $\geo$ denotes a geometric distribution.\footnote{There are two conflicting conventions for the geometric distribution, shifted by $+1$. We use the convention that the support of the distribution includes the value $0$. } It follows the distribution exactly if we truncate the geometric distribution at $\alpha n -1$.

\begin{lemma}\label{lem:critical}
For a run of the \dega on \leadingones and for $i\in[n]$, let $t_1$ be the first generation in which $\LO(x_{t}^2) = i$ (assuming this event happens), and let $t_2+1$ be the first generation in which $\LO(x_t^1) \ge i$. Let $\NO$ be the non-optimized part in generation $t_2$, and let $H_{t_2+1}$ be the Hamming distance of $x_{t_2+1}^1$ and $x_{t_2+1}^2$ in $\NO$.\footnote{The slight mismatch of indices in the definition of $\NO$ and $H_t$ is needed for a correct statement. It will not play a role later since $H_{t}$ does not change too much in a single generation.} Then the probability that in generation $t_2+1$ the $(i+1)$-th bit differs in $x^1$ and $x^2$ is 
\begin{align}\label{eq:critical}
\Pr\big[\large(x^1_{t_2+1}\large)_{i+1} \neq \large(x^1_{t_2+1}\large)_{i+1}\big] = \frac{H_{t_2+1}}{|\NO|}.
\end{align}
\end{lemma}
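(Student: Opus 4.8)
The plan is to combine the principle of deferred decisions with the uniformity statement quoted just before the lemma (Corollary~7 of the cited work, which already covers every non-optimized position \emph{except} the critical bit). First I would pin down the dynamics in the window $[t_1,t_2]$. Since $\LO(x^1_t)<i$ for all $t\le t_2$ while $\LO(x^2_t)\ge i$, the two search points always differ in fitness there, so the algorithm is in an exploitation phase throughout and never performs a diversity step. Moreover $x^1$ cannot overtake $x^2$ (its fitness stays below $i$), so $x^2$ is never replaced and the individual $v:=x^2_{t_2}$ is literally unchanged from $t_1$ up to and including the crossover at generation $t_2+1$. In particular $\LO(v)=i$, hence $v_{i+1}=0$, the non-optimized part is $\NO=\{i+1,\dots,n\}$ throughout, and $i+1$ is exactly the critical bit. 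The quantity to compute is therefore $\Pr[y_{i+1}\neq v_{i+1}]=\Pr[y_{i+1}=1]$, where $y$ is the accepted catch-up crossover, with the Hamming distance $H_{t_2+1}$ measured inside the fixed window $\NO$.

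The structural observation I would isolate next is that \leadingones never reads a bit beyond the current frontier. Every crossover generated during $[t_1,t_2]$, accepted or rejected, has its first zero at a position $\le i$ (accepted ones keep $\LO(x^1)<i$, rejected ones have even smaller $\LO$), so none of them reveals any bit of $\NO=\{i+1,\dots,n\}$. Hence I may defer all randomness living on $\NO$ — the native bits of $x^1$ and of $v$ there, and the crossover coins there — until generation $t_2+1$. Within $\NO$ a biased crossover acts as a pure \emph{thinning}: a position where $x^1$ and $v$ agree stays agreeing, while a position where they differ is copied from $v$ (thus made to agree) independently with probability $1/\lambda$ and otherwise keeps differing. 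This is the only thing that ever happens to $\NO$ across the whole window.

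With this in hand I would prove the invariant that, conditioned on the history, the differing positions of $x^1_t$ and $v$ inside $\NO$ form a \emph{uniformly random} subset of $\NO$ of their current size — equivalently, all positions of $\NO$, including the critical bit $i+1$, are exchangeable. The base case is the moment $x^2$'s frontier reaches $i$: just before that, position $i+1$ is an ordinary non-critical non-optimized position, so [2, Corollary~7] gives that the differing set is a uniform subset, and crucially, in that uniform description the differing set is independent of the actual bit \emph{values} of $v$, so revealing $v_{i+1}=0$ when the frontier advances to $i$ does not disturb exchangeability. For the inductive step, each exploitation crossover thins the differing set uniformly and independently, and — the crucial point — its acceptance depends only on positions $\le i$, which are disjoint from $\NO$ and whose crossover coins are independent of those on $\NO$; hence the thinning is never biased by conditioning on acceptance, and a uniform subset thinned uniformly stays uniform. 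The same applies to the final catch-up step: the event $\{\LO(y)\ge i\}$ is measurable with respect to positions $1,\dots,i$ only, hence independent of the coin deciding $y_{i+1}$.

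Putting the pieces together, conditioned on the catch-up and on the realized value of $H_{t_2+1}$, the differing set at generation $t_2+1$ is a uniformly random $H_{t_2+1}$-subset of the $|\NO|$ positions, so each single position — in particular the critical bit $i+1$ — lies in it with probability $H_{t_2+1}/|\NO|$, which is the claim. I expect the main obstacle to be the bookkeeping at the two transition moments rather than the thinning itself: first, justifying at $t_1$ that learning $v_{i+1}=0$ (and that $x^2$'s leading ones stop exactly at $i$) leaves the differing-set distribution untouched, where one must lean on the independence of the differing set from the bit values in the uniform description of [2]; and second, verifying rigorously that no intermediate crossover in $[t_1,t_2]$ ever reads into $\NO$, so that the deferral is legitimate and the acceptance events remain independent of the $\NO$-coins. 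Once these are settled, the exchangeability invariant, and with it the formula, follow.
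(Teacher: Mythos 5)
Your proof is correct, and it rests on the same central observation as the paper's: for every generation $t\le t_2$, the value of the $(i+1)$-th bit of the offspring either does not affect its fitness or affects it in a way that is irrelevant for selection, so the critical bit behaves like any other position of $\NO$. Where you diverge is in how this observation is turned into the formula. The paper argues globally: since all bits of $\NO$ influence selection only through the Hamming distance, the law of $P_{t_2+1}$ is invariant under distance-preserving automorphisms of $\NO$ (position permutations combined with value flips, constrained to keep $(\pi(x^2))_{i+1}=0$), and exchangeability of all positions — hence the probability $H_{t_2+1}/|\NO|$ — follows in one step. You instead run an explicit induction over the window $[t_1,t_2]$: deferred decisions, biased crossover acting as thinning of the differing set, and the invariant that a uniform random subset thinned by acceptance-independent coins stays uniform conditional on its size. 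Note that your deferral is only legitimate because the window is a pure exploitation phase, where acceptance is measurable with respect to positions $1,\dots,i$ and selection never consults the Hamming distance — a structural fact you correctly isolate ($x^2$ is never replaced, so $v$ is fixed); the same argument would fail across diversity phases, where selection reads $\NO$ through the Hamming distance, which is exactly why you must delegate everything before $t_1$ to \cite[Corollary~7]{cerf2024population} together with the independence of the differing set from the bit values in its uniform description. The paper's automorphism argument needs no such case split, because the Hamming distance is invariant under its symmetries, and the conditioning $\LO(x^2_{t_1})=i$ is absorbed into the constraint $(\pi(x^2))_{i+1}=0$; this makes it shorter, at the price of gesturing at symmetry where you make the probabilistic bookkeeping (thinning, value-independence at the $t_1$-transition) explicit. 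Both routes lean on the same citation for the background uniformity, so neither is more elementary; yours is a legitimate, more constructive alternative.
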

Note that in \Cref{lem:critical}, at time $t_2 + 1$ the old search point $x_{t_2}^2$ stays in the population and becomes the new search point $x_{t_2+1}^1$, because it is now at most as fit as the offspring $y_{t_2}$ generated in generation $t_2$. In case of equal fitness it could be either labeled $x_{t_2+1}^1$ or $x_{t_2+1}^2$, but let us assume the former for consistency. In any case, this search point has a zero-bit at position $i+1$. Hence, the event in~\eqref{eq:critical} could equivalently be phrased as ``$(x^2_{t_2+1})_{i+1} = 1$'' or as ``$\LO(x^2_{t_2+1}) \ge i+1$'' or as ``the population $P_{t_2+1}$ contains a one-bit at position $i+1$''. 
\begin{proof}[Proof of \Cref{lem:critical}]
First note that before generation $t_1-1$, none of the bits in $\{i+1,\ldots,n\}$ has any impact on the fitness of explored search points. Hence, the only impact of this part on selection is by the Hamming distance between the two search points. Therefore, at any time $t< t_1-1$ the positions at which $x^1_{t}$ and $x^2_{t}$ differ in $\NO$ are uniform at random in $\NO$, see also~[2] for a formal argument. 

At time $t_1-1$, the zero-bit at position $i+1$ of the offspring $y_{t_1-1} = x^2_{t_1}$ does influence the fitness of $y_{t_1-1}$, but it does not impact selection, since $y_{t_1-1}$ is fitter than the two old individuals regardless of its value at position $i+1$. Likewise, consider any offspring $y_t$ for $t_1 \le t < t_2$. We have $\LO(y_t) < i$ by definition of $t_2$, and therefore the $i+1$-th has no impact on the fitness of $y_t$. Finally, for $t=t_2$, the $i+1$-th bit of $y_t$ does influence the fitness, but it does not influence selection: due to $\LO(y_t)\ge i$, the offspring $y_t$ is included into the population regardless of the value of the $i+1$-th bit. 

Summarizing, for every generation $t \le t_2$ the value of the $(i+1)$-th bit of the offspring $y_t$ either does not impact its fitness, or it influences the fitness in a way that is irrelevant for selection. As all other bits in the non-optimized part, it only influences selection via the Hamming distance of the search points. Therefore, for any possible population $\{x^1,x^2\}$ with $(x^2)_{i+1} = 0$ and any distance-preserving automorphism $\pi$ of the positions in $\NO$ with $(\pi(x^2))_{i+1}=0$, we have $\Pr[P_{t_2+1} = \{x^1,x^2\}] = \Pr[P_{t_2+1} = \{\pi(x^1),\pi(x^2)\}]$, similar to~[2, Lemma 4]. Now, for any given Hamming distance $H$ in $\NO$ between $x^1$ and $x^2$ we can find an automorphism $\pi$ that maps the $H$ differing bits to arbitrary positions in $\NO$, by swapping positions and possibly flipping the role of ones and zeros to ensure $(\pi(x^2))_{i+1}=0$. Hence, all positions in $\NO$ have the same probability to be different in $P_{t_2+1}$, and this probability must be $H_{t_2+1}/|\NO|$.
\end{proof}

\section{The \todega on \leadingones}\label{sec:proofs}
Our main result is the following upper bound on the expected runtime $\E{T}$ of the \todega on \leadingones. 
\begin{theorem}\label{thm:mainresult}
Let $T$ be the runtime of the \todega with parameter $2\le \lambda = o(n)$ on \leadingones. Then
\begin{align}\label{eq:maintheorem}
    \EE[T] = O\big( \lambda n + n^2\log n/\sqrt \lambda \big).
\end{align}
\end{theorem}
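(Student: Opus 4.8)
The plan is to split the run into the diversity and exploitation phases of \Cref{sec:prelim} and to bound their total lengths separately. For the exploitation phases I would use the potential $\Phi := \LO(x^1)+\LO(x^2)$, which is non-decreasing and bounded by $2n$. In an exploitation phase the improving bit (the first zero-bit of the less fit point $x^1$) is a one-bit of $x^2$, so the biased crossover in line~\ref{algoline:crossover2} copies it with probability exactly $1/\lambda$, and whenever it does the offspring is strictly fitter than $x^1$, since all positions up to and including the improving one become ones. Hence the loop in lines~\ref{algoline:DiPEC_start}--\ref{algoline:DiPEC_end} succeeds within $O(\lambda)$ generations in expectation, and every success raises $\Phi$ by at least one. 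Summing over the at most $2n$ increments of $\Phi$ gives $O(\lambda n)$ generations spent in exploitation phases.

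For the diversity phases I would first bound a single phase by $O(n)$ generations: it ends as soon as mutation flips the critical bit while keeping the at most $n$ leading ones intact, an event of probability $\Omega(1/n)$ per generation. Thus the total diversity cost is $O(n)$ times the number $N$ of diversity phases, and the whole theorem reduces to showing $\EE[N]=O(n\log n/\sqrt\lambda)$.

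The core of the argument is a diversity drift on the blocking bits $B_k$. Using the uniformity statements of \Cref{sec:prelim} and \Cref{lem:critical}, the relevant positions behave like fair coins, so an exploitation phase advances both search points until the first blocking bit: skipping bits are crossed for free, differing bits are crossed by biased crossover, and only a blocking bit (a zero in both points) halts progress and triggers the next diversity phase. Consequently the fitness gain $g_k$ of a cycle is essentially geometric with success probability the blocking density $\beta_k := B_k/(\alpha_k n)$, so $\EE[g_k]=\Theta(1/\beta_k)$. I would then show $\beta_k$ equilibrates at $\Theta(1/\sqrt\lambda)$ by balancing two opposing effects: (i) a successful biased crossover copies, besides the improving bit, each of the remaining $\approx H_k$ differing bits with probability $1/\lambda$, so one cycle converts a $\Theta(g_k/\lambda)$ fraction of the differing bits into agreeing ones and creates $\Theta(g_k H_k/\lambda)$ new blocking bits; and (ii) a diversity phase, selecting for Hamming distance, flips agreeing bits of \NO to differing ones at rate $\Theta(\bar H_k/n)$ per generation and over its $\Theta(n)$ length reduces $B_k$ by a constant factor. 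Equating loss and gain yields $B_k=\Theta(\alpha_k n/\sqrt\lambda)$, hence $\EE[g_k]=\Theta(\sqrt\lambda)$, and since the gains sum to $n$ this gives $\EE[N]=O(n/\sqrt\lambda)$ up to lower-order corrections.

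Combining the two bounds gives $\EE[T]=O(\lambda n)+O(n)\cdot O(n\log n/\sqrt\lambda)=O(\lambda n+n^2\log n/\sqrt\lambda)$. The main obstacle is making the drift of the third step rigorous: the quantities $B_k,H_k,\alpha_k$ are coupled and non-stationary, the per-cycle gains have large variance, and \NO keeps shrinking, so one must prove that the blocking density stays $O(1/\sqrt\lambda)$ with high probability throughout the run rather than merely in expectation. I expect the extra $\log n$ factor to enter exactly here, through the concentration (and the union over the $\Theta(n)$ fitness levels, including the delicate small-$\alpha_k$ tail at the end of the run) needed to guarantee that diversity never collapses, rather than from the idealized equilibrium itself.
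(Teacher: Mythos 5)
Your proposal is correct and takes essentially the same route as the paper's proof: the same phase decomposition ($O(\lambda)$ evaluations per exploitation improvement via the geometric waiting time, $O(n)$ per diversity phase), and the same core mechanism whereby accepted biased crossovers turn differing bits into agreeing/blocking bits at rate $\approx H_k/\lambda$ per acceptance while diversity phases remove a constant fraction of them (\Cref{lma:div-in-div-phase,lma:length_exploit_whp,lma:div-in-exploit}), so the blocking density equilibrates at $\tilde\Theta(1/\sqrt{\lambda})$ and only $\tilde O(n/\sqrt{\lambda})$ diversity phases occur. The differences are purely bookkeeping — the paper proves $\EE[\bar H_t]=O(\alpha_t n\log n/\sqrt{\lambda})$ by induction over pairs of phases (\Cref{thm:main}) and converts this into a per-fitness-level probability $O(\log n/\sqrt{\lambda})$ of entering a diversity phase via \Cref{lem:critical}, whereas you count phases through per-cycle fitness gains — and, as you anticipated, the $\log n$ enters exactly through the w.h.p.\ concentration bound on per-phase progress ($L_{\max}=O(\sqrt{\lambda}\log n)$ in \Cref{lma:length_exploit_whp}).
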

Since the first summand increases and the second decreases in $\lambda$, their sum is asymptotically minimal when both terms are equal. This holds for $\lambda = (n\log n)^{2/3}$, which gives
\begin{align}
    \label{eq:E[T]-final}
    \EE[T] = O\big( n^{5/3}\log^{2/3}n \big) .
\end{align} 

\subsection{Proof of Theorem~\ref{thm:mainresult}}
The rest of this section is devoted to proving \Cref{thm:mainresult}, and throughout the section we will consider the \todega with $2\le \lambda = o(n)$ on \leadingones. Throughout the analysis, $P=\{x^1,x^2\}$ denotes the current population of the algorithm, where $\LO(x^1)\le \LO(x^2)$. The key ingredient is to show that the diversity remains close to its maximum, or equivalently that its complement $\bar H_t$ in the non-optimized part remains bounded. This is formalized by the following theorem. Recall that $\alpha\in[0,1]$ denotes the non-optimized fraction of the string and that we overload the index of $\alpha$:  for the $k$-th phase we refer by $\alpha_k$ to the value of $\alpha$ at the beginning of phase $k$, and for the $t$-th generation (``at time $t$'') we denote its value by $\alpha_t$. We omit the index altogether if the considered point in time is clear from the context.
\begin{theorem}
\label{thm:main}
There is a constant $c>0$ such that for all times $t$ with $\alpha_t \ge n^{-1/3}$, 
\begin{align*}
    \E{\bar H_{t+1}} \leq c \cdot \alpha_t n \log n/\sqrt{\lambda}.
\end{align*}
\end{theorem}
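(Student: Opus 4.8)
The plan is to use the diversity deficit $\bar H_t = \alpha_t n - H_t$ itself as a potential and to exploit its asymmetric behaviour in the two phase types. First I would note that $\bar H$ can only \emph{decrease} during a diversity phase: there $\alpha$ is constant and SelectPopulation retains the pair of largest Hamming distance among the equally fit points, so $H$ is non-decreasing and hence $\bar H$ is non-increasing. Consequently $\bar H$ only ever grows during exploitation phases, and it suffices to control how large $\bar H$ can become \emph{inside a single exploitation phase}, starting from whatever value the preceding diversity phase left.

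Next I would isolate the two competing effects inside an exploitation phase. For the \emph{increase}: conditioned on the biased crossover being accepted (i.e.\ $f(y)>f(x^1)$), the improving bit lies in the optimized part (the only exception being the catch-up step, treated via \Cref{lem:critical}), so by the \leadingones structure acceptance is independent of the bits in $\NO$, and each $\NO$-bit is still copied from $x^2$ independently with probability $1/\lambda$. Every disagreeing position that is copied becomes an agreeing one, so one accepted replacement raises $\bar H$ by $\bin(H,1/\lambda)$, with mean $H/\lambda \le \alpha n/\lambda$. For the \emph{phase termination}: the phase ends exactly when $x^1$ has caught up to the level of $x^2$ and the critical bit agrees, which by \Cref{lem:critical} happens with probability $\bar H/|\NO| = \bar H/(\alpha n)$ at each such catch-up. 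Using that jumps are dominated by $1+\geo(1/2)$, consecutive catch-ups are separated by $O(1)$ expected replacements, so the tie hazard per replacement is $\Theta(\bar H/(\alpha n))$ while the drift of $\bar H$ per replacement is at most $\alpha n/\lambda$.

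The central observation is that this process is \emph{self-regulating}: the hazard of ending the phase grows linearly in $\bar H$, so large values are heavily penalized. Quantitatively, growing $\bar H$ from a small value up to $x$ needs $\approx x\lambda/(\alpha n)$ replacements, and the probability of surviving them without a tie is roughly $\exp\!\big(-\sum_j \bar H_j/(\alpha n)\big) = \exp\!\big(-\Theta(x^2\lambda/(\alpha n)^2)\big)$. For $x = c\,\alpha n\log n/\sqrt\lambda$ the exponent equals $\Theta(\log^2 n)$, so $\bar H$ exceeds this threshold in any given phase only with super-polynomially small probability. I would make this rigorous by building a supermartingale that pairs the bounded per-step increments of $\bar H$ with the tie indicator (or equivalently a negative-drift theorem with an exponentially growing penalty), using that $\alpha$ and $|\NO|$ shrink by only $O(1)$ per replacement and are therefore essentially constant over the short span of a phase. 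Combining the super-polynomial tail with the trivial bound $\bar H \le \alpha n$ then yields $\E{\bar H_{t+1}} = O(\alpha_t n\log n/\sqrt\lambda)$.

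The step I expect to be the main obstacle is the bookkeeping that couples the \emph{increase} of $\bar H$ with the \emph{termination} event. One must relate the coarse catch-up events, to which \Cref{lem:critical} directly applies, to the individual biased-crossover replacements that drive $\bar H$ upward; control the free-rider jumps governing how many replacements separate two consecutive catch-ups; and simultaneously account for the positions leaving $\NO$ as the optimized part grows, which themselves perturb $\bar H$. Turning the heuristic ``tie hazard $\propto \bar H$, drift $\le \alpha n/\lambda$'' into a clean supermartingale with honestly bounded increments and controlled dependencies is where the real work lies, and the $\log n$ factor in the bound is precisely the slack needed to push the survival probability below every polynomial.
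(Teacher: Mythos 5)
Your reduction hides a fatal gap: the only property you establish for diversity phases is that $\bar H$ does not increase, and that is far too weak. A tie does not reset or shrink $\bar H$; it merely pauses its growth. So when the next exploitation phase begins, it starts from whatever deficit has already accumulated, and your hazard argument only shows that a phase starting at level $y$ ends after roughly $\alpha n/y$ catch-ups, during which $\bar H$ gains another $\Theta\lb(\alpha n)^2/(y\lambda)\rb$ in expectation. Across phases this gives the strictly increasing recursion $y_{k+1}\approx y_k+(\alpha n)^2/(y_k\lambda)$, whose solution grows like $\alpha n\sqrt{k/\lambda}$; after $\Theta(\log^2 n)$ phases the claimed bound $c\,\alpha_t n\log n/\sqrt{\lambda}$ is already violated, while the run contains vastly more phases than that (the whole point of \Cref{thm:mainresult} is that there can be up to $\Theta(n\log n/\sqrt\lambda)$ diversity phases). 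A global accounting shows the same thing: over the run there are up to $n$ accepted replacements, each adding $\approx H/\lambda$ agreements in expectation, so if the invariant held (i.e.\ $H\approx\alpha n$) the cumulative increase alone would be of order $n^2/\lambda$, which for $\lambda=o(n)$ dwarfs $\alpha n\log n/\sqrt\lambda$ even after subtracting the agreements absorbed into the optimized part. Some mechanism must actively \emph{destroy} agreements, and your proposal has none; your ``self-regulation'' only makes ties more frequent when $\bar H$ is large, which without a contraction mechanism just means more pauses, not recovery.

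The missing ingredient is the paper's \Cref{lma:div-in-div-phase}: during a diversity phase, every agreeing bit in $\NO$ has probability at least $1/(4e^2)$ of being destroyed by an accepted one-bit mutation before the phase ends (such flips are always accepted because they increase the Hamming distance, and the phase lasts $\Omega(n)$ generations with constant probability), giving the multiplicative contraction $\E{\bar H_{k+1}\cond \bar H_k}\le(1-\tfrac{1}{4e^2})\bar H_k$. The theorem is then proved by induction over pairs of phases: the additive gain of $O(\alpha_k n\log n/\sqrt\lambda)$ per exploitation phase (\Cref{lma:length_exploit_whp,lma:div-in-exploit}, whose proofs use essentially your tie-hazard/blocking-bit mechanism to bound the phase length) is balanced against this contraction, and the fixed point of $\bar H\mapsto(1-\tfrac{1}{4e^2})\lb\bar H+O(\alpha n\log n/\sqrt\lambda)\rb$ is exactly of the claimed order. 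Your within-phase analysis is a reasonable substitute for \Cref{lma:length_exploit_whp}, but without the contraction during diversity phases no invariant of the claimed form can be maintained across the run.
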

\noindent We first show how \Cref{thm:main} implies~\Cref{thm:mainresult}.

\begin{proof}[Proof of \Cref{thm:mainresult}]
We will prove the seemingly weaker bound 
$\EE[T] = O(n^{5/3} + \lambda n + n^2\log n/\sqrt \lambda)$. Note that the additional term $n^{5/3}$ does nothing: either $\lambda \ge n^{2/3}$ and $n^{5/3}$ is dominated by $\lambda n$; or $\lambda \le n^{2/3}$ and $n^{5/3}$ is dominated by the last term.

In our analysis we will focus on the part with $\alpha_t \ge n^{-1/3}$, or equivalently when the non-optimized part has size $\ge n^{2/3}$. The last part with $\alpha_t < n^{-1/3}$ is negligible: we will show below in \Cref{lma:time_exploit_bit} that the algorithm needs in expectation $O(\lambda)$ function evaluations to improve $x^1$ when $\LO(x^1)\neq \LO(x^2)$, and it needs in expectation $O(n)$ evaluations to find a fitness improvement when $\LO(x^1)= \LO(x^2)$. The former can happen at most $n$ times, the latter at most $n^{2/3}$ times when $\alpha_t < n^{-1/3}$. Hence, the time spent for the last interval is at most $O(\lambda n + n^{5/3})$, as required.\footnote{The analysis of this part could easily be improved, but improvements here would not yield a better bound since the bottlenecks are elsewhere.} Moreover, the same argument shows that all exploitation phases together (of the whole run, not just the last interval) take time $O(\lambda n)$, and each diversity phase takes time $O(n)$. Hence, it will suffice to show that for the remaining part with $\alpha_t> n^{-1/3}$, there are at most $O(n\log n/\sqrt{\lambda})$ diversity phases in expectation.

Recall the notion of blocking bits from Section~\ref{sec:prelim}. We define $\calB_i$ for $1\le i \le n-n^{2/3}$ as the event that the algorithm enters the diversity phase with $\LO(x^1)=\LO(x^2)=i$. Assume that $\LO(x^1) < \LO(x^2)=i$ at some point, because otherwise $\calB_i$ cannot occur. Then consider the unique point in time $t$ when $\LO(x^1_t) < \LO(x^2_t) =i$ and an offspring $y$ is found with $\LO(y) \ge i$. Then the $\bar H_{t+1}$ identical bits of $x^2_t$ and $y$ among the last $\alpha_t n$ positions are distributed uniformly at random by \Cref{lem:critical} and~\cite[Corollary~7]{cerf2024population}. In particular, by \Cref{lem:critical}, for a given value of $H_{t+1}$ the probability that position $i+1$ is one of those positions is $\bar H_{t+1}/(\alpha_t n)$. Since $x_2$ has a zero-bit at position $i+1$, this is equivalent to the event $\calB_i$, and thus
\begin{align}
    \Pr[\calB_i \mid \bar H_{t+1}] \le \frac{\bar H_{t+1}}{\alpha_t n},
\end{align}
where the bound would be an equality if we conditioned on the fitness hitting level $i$ at some point during the run. By the law of total expectation and by \Cref{thm:main},
\begin{align*}
    \Pr[\calB_i] = \mathbb E \lbr \Pr[\calB_i \cond \bar H_{t+1}] \rbr \leq \frac{\EE[\bar H_{t+1}]}{\alpha_t n} \leq c \cdot \log n/\sqrt \lambda.
\end{align*}
Thus the expected number of diversity phases with $\alpha_t > n^{-1/3}$ is $\sum_{i=1}^{n-n^{2/3}} \Pr[\calB_i]  \le c n\log n/ \sqrt{\lambda}$. This concludes the proof.
\end{proof}

In the proof of \Cref{thm:mainresult} we used \Cref{thm:main} and also the following lemma, which we prove now.
\begin{lemma}
    \label{lma:time_exploit_bit}
    Assume $\LO(x^1_t) < \LO(x^2_t)$ for the current population $P_t=\{x^1_t,x^2_t\}$ of the \dega on \leadingones. Let $T_{\text{Improve}}$ be the number of $\LO$-evaluations until $x^1$ is replaced by a fitter search point. Then
    $
        \E{T_{\text{Improve}}} = O(\lambda).
    $
\end{lemma}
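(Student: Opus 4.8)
The plan is to analyze a single iteration of the inner loop in lines~\ref{algoline:DiPEC_start}--\ref{algoline:DiPEC_end} and bound the success probability of one biased crossover from below by a constant, so that the number of trials is stochastically dominated by a geometric random variable with constant-order parameter, up to the factor $\lambda$. Recall that in an exploitation phase we have $\LO(x^1_t) < \LO(x^2_t)$, and the \emph{improving bit} of $x^1$ sits at the position of its first zero-bit, which is a one-bit in $x^2$. A single biased crossover $y = \text{Crossover}(x^1,x^2,1/\lambda)$ takes each bit from $x^2$ with probability $1/\lambda$ and otherwise from $x^1$. The offspring $y$ is strictly fitter than $x^1$ as soon as it copies the one-bit from $x^2$ at the improving position \emph{and} does not overwrite any of the leading ones of $x^1$ with a zero. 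The latter cannot happen, since in the optimized-prefix up to $\LO(x^1)$ both search points agree (they are all ones in $x^1$, and $x^2$ is at least as fit so it also has ones there), so copying from $x^2$ there changes nothing. Hence the \emph{only} requirement for success is that the crossover takes the improving bit from $x^2$.

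First I would make this precise: let $p$ denote the position of the improving bit. The event $\{(y)_p = (x^2)_p = 1\}$ occurs with probability exactly $1/\lambda$, and conditioned on it, $\LO(y) \ge p > \LO(x^1)$, so $f(y) > f(x^1)$ and the loop terminates. This already gives that a single crossover succeeds with probability at least $1/\lambda$, independently across trials, so the number $T_{\text{Improve}}$ of $\LO$-evaluations until success is dominated by a geometric random variable with success probability $1/\lambda$. Therefore $\E{T_{\text{Improve}}} \le \lambda = O(\lambda)$, which is exactly the claimed bound. I would emphasize that the independence across loop iterations is immediate because each call to Crossover draws fresh randomness, and that the bound holds deterministically in the current state, i.e.\ it does not depend on the Hamming distance, on $\alpha_t$, or on which exploitation phase we are in.

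The only subtlety I expect to watch for is a clean justification that copying bits from $x^2$ in the shared leading-ones prefix never decreases fitness, so that taking the improving bit is genuinely \emph{sufficient} (not merely necessary) for $f(y) > f(x^1)$. This follows from $\LO(x^1)\le\LO(x^2)$: every position in $\{1,\dots,\LO(x^1)\}$ is a one-bit in both search points, so regardless of where the crossover draws those bits, $y$ retains all leading ones of $x^1$; and the improving bit of $x^1$ is, by definition, the first position where $x^1$ has a zero while $x^2$ has a one, so fixing that single bit already strictly increases $\LO$. Hence the probability that any single biased crossover is accepted is at least $1/\lambda$, and the geometric-domination argument completes the proof. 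No concentration or drift machinery is needed here; this is a one-shot success-probability estimate, and the main result's heavy lifting is deferred to \Cref{thm:main}.
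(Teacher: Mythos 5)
Your proof is correct and takes essentially the same approach as the paper: the paper likewise identifies the success event of a single biased crossover as copying the improving bit from $x^2$, which has probability $1/\lambda$, so that $T_{\text{Improve}}$ is geometrically distributed and $\E{T_{\text{Improve}}} = O(\lambda)$. Your extra verification that this event is \emph{sufficient} (because the shared all-ones prefix of $x^1$ is preserved no matter which parent those bits come from) is a detail the paper's proof leaves implicit, but it is the same argument.
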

    \begin{proof}
        This follows directly from the construction of exploitation phases. We perform biased crossovers between $x^1_t,x^2_t$ until we find an improvement for $x^1_t$. Let $\mathcal{I}$ be the event of moving the improving bit from $x^2$ to $x^1$ in one such biased crossover. Then
        \begin{align*}
            \Pr[\mathcal I] = 1/\lambda \quad \text{and} \quad T_{\text{Improve}} \sim \geo(\Pr[\mathcal I]),
        \end{align*}
        where $\geo$ denotes a geometric distribution. Hence $\EE[T_{\text{Improve}}] = O(\lambda)$.
    \end{proof}

\subsection{Proof of Theorem \ref{thm:main}}
\label{sub:proof_main}
It remains to prove \Cref{thm:main}, and this whole section is devoted to that. Recall that $\bar H_k$ is the number of bits that align in the non-optimized part of $x^1, x^2$ at the start of phase $k$. We start by proving two lemmas which analyze how the  diversity changes in a diversity and in an exploitation phase, respectively. In the following, we frequently use these inequalities:
\begin{align}
    \label{eq:exp-bounds}
    1-x \leq \exp \lb -x \rb \quad \text{and} \quad 1-x \geq \exp \lb -2x \rb, x \in [0,1/2].
\end{align}
Combining both gives $(1-x)^r \le \exp(-xr) \le 1-xr/2$ if $x\in [0,1/2]$, $r\ge 0$ and $xr \le 1$.
\begin{lemma}
    \label{lma:div-in-div-phase}
    For $k\ge 1$, assume that the $k$-th phase is a diversity phase with $\alpha_k n\ge n^{2/3}$. Then 
    \begin{align*}
        \E{\bar H_{k+1} \cond \bar H_k} \in \big[ \tfrac{1}{2}\bar H_k-1, \big(1 - \tfrac{1}{4e^{2}}\big)\bar H_k\big].
    \end{align*}
    \begin{proof}
        We first prove the lower bound. We may neglect any mutation that flips the optimized part, since the offspring will be rejected. Among the remaining mutations, we compare the critical bit $i$ with any fixed non-optimized bit $j$. By symmetry, both are equally likely to be flipped before the other. Hence, the probability that bit $j$ is flipped strictly before the first improving mutation is at most $1/2$. So before that mutation, $\bar H_k$ decreases by at most a factor of $1/2$ in expectation. Finally, in the improving mutation, in expectation at most one additional bit is flipped. 
        
        For the upper bound, let $c_2 = 1 - 1/(4e^{2})$. For a fixed generation, let $p_{k,\text{out}}$ be the probability of leaving phase $k$. Then
        \begin{align*}
            p_{k,\text{out}}  = \tfrac{1}{n} \big( 1 - \tfrac 1 n \big)^{n - \alpha n - 1} \le \tfrac{1}{n}.
        \end{align*}
        Let $Y_k$ be the number of $\LO$-evaluations until leaving phase $k$. For $y = n/2$ we calculate
        \begin{align*}
            \Pr\lbr Y_k \geq y \rbr \geq \lb 1 - p_{k,\text{out}}  \rb^y \stackrel{\eqref{eq:exp-bounds}}{\geq} \exp\lb-2p_{k,\text{out}} y \rb \ge 1/e,
        \end{align*}
        where the second step holds for $n \geq 2$. To calculate the reduction in $\bar H_t$, let us consider the event $\mathcal F_i$ of flipping bit $i$ in a one-bit flip in phase $k$; that is, there is a mutation which flips bit $i$ and no other bit. The probability that a fixed mutation is of this type is at least $\tfrac{1}{n}(1-\tfrac{1}{n})^{n-1} \ge \tfrac{1}{en}$. Moreover, by law of total probability, we have
        \begin{align}
        \label{eq:one-bit-flip}
            \Pr \lbr \mathcal F_i \rbr \ge \Pr \lbr \mathcal F_i \cond Y_k \geq y\rbr\cdot \Pr \lbr Y_k \ge y \rbr.
        \end{align}
        Let us analyze $\Pr \lbr \mathcal F_i \cond Y_k \geq y\rbr$. Since more than $y$ trials only increase the chance of $\mathcal F_i$, we get
        \begin{align*}
            \Pr \lbr \mathcal F_i \cond Y_k \ge y\rbr \ge 1- \big( 1- \tfrac{1}{en} \big)^y
            \stackrel{\eqref{eq:exp-bounds}}{\ge} 1-\exp( -\tfrac 1 {2e} ) \stackrel{\eqref{eq:exp-bounds}}{\ge} \tfrac{1}{4e} .
        \end{align*}
        Plugging our finding back into \Cref{eq:one-bit-flip} gives
        $\Pr [ \mathcal F_i ] \ge  1/(4e^2)$. 
        Thus each one-bit in $\bar H_{k}$ has a chance of at least $1/(4e^2)$ to vanish. We conclude that $\E{\bar H_{k+1}} \leq ( 1 - 1/(4e^2) ) \cdot \bar H_k$, as required.
    \end{proof}
\end{lemma}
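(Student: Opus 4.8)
The plan is to analyze a diversity phase bit by bit, tracking for each of the $\bar H_k$ aligning bits in the non-optimized part whether it is still aligning at the end of the phase. Throughout such a phase the two search points have equal fitness, the algorithm only mutates, and \textsc{SelectPopulation} retains the two of the three candidates with the largest Hamming distance; consequently the Hamming distance is non-decreasing, so $\bar H$ is non-increasing, and the phase ends precisely with the mutation that flips the critical bit and thereby strictly improves the fitness. Both bounds then reduce to estimating, for a fixed aligning bit, the probability that it gets \emph{resolved}, i.e. turned into a differing bit by an accepted one-bit mutation, before the phase terminates.

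For the upper bound I would first show that the phase is long with constant probability. A strict improvement requires flipping the critical bit while preserving the leading-ones prefix, so the per-generation exit probability is $p_{\text{out}} \le 1/n$ and hence $\Pr[Y_k \ge n/2] \ge (1-p_{\text{out}})^{n/2} \ge 1/e$ using \eqref{eq:exp-bounds}. Given at least $n/2$ generations, a one-bit mutation hitting a fixed aligning bit $i$ occurs per generation with probability at least $\tfrac{1}{en}$, so it occurs at least once with probability at least $1-(1-\tfrac{1}{en})^{n/2} \ge \tfrac{1}{4e}$; such a flip strictly increases the Hamming distance and is therefore accepted. Combining the two estimates, each aligning bit is resolved with probability at least $\tfrac{1}{4e^2}$, and summing over the $\bar H_k$ aligning bits yields $\E{\bar H_{k+1} \mid \bar H_k} \le (1-\tfrac{1}{4e^2})\bar H_k$.

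For the lower bound I would argue that most aligning bits survive, via a symmetry between the critical bit and any fixed non-optimized bit $j$. Discarding mutations that touch the optimized part (they are rejected), an aligning bit $j$ is lost only if a one-bit mutation flips it strictly before the terminating mutation flips the critical bit; since a one-bit flip is equally likely to hit bit $j$ or the critical bit first, this loss occurs with probability at most $1/2$, so each aligning bit survives with probability at least $1/2$ and in expectation at least $\tfrac12 \bar H_k$ remain up to the last generation. The terminating mutation is itself a standard bit mutation that flips at most one further bit in expectation, and absorbing this into an additive loss gives $\E{\bar H_{k+1} \mid \bar H_k} \ge \tfrac12 \bar H_k - 1$.

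The main obstacle I anticipate is rigor rather than arithmetic, and it lies in two places. First, in the upper bound one must ensure that a resolved bit is not silently un-resolved: a one-bit flip back to aligning is always rejected, but a multi-bit mutation can re-align bit $i$ while resolving another bit at no net cost in diversity and then be accepted on a random tie-break. The clean fix is to bound the expected \emph{net} diversity gain $\E{H_{\text{end}} - H_k}$ directly and exploit that $\bar H$ is non-increasing, rather than treating the per-bit events as independent survival indicators. Second, the lower bound requires careful bookkeeping across the phase boundary: after the improving mutation the non-optimized part shrinks by the critical bit together with its free riders, so one must invoke the uniformity and exchangeability guarantees of \Cref{lem:critical} and \cite{cerf2024population} both to justify the exact factor-$1/2$ symmetry and to control that only $O(1)$ aligning bits leave the non-optimized part, matching the index conventions for $\NO$ and $H_t$.
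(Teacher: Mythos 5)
Your proposal follows essentially the same argument as the paper's proof: for the upper bound, the identical long-phase estimate ($\Pr[Y_k \ge n/2] \ge 1/e$ via $p_{k,\text{out}} \le 1/n$), the per-generation one-bit-flip probability $\ge \tfrac{1}{en}$ giving a hit probability $\ge \tfrac{1}{4e}$, and the combined constant $\tfrac{1}{4e^2}$ with linearity of expectation; for the lower bound, the same symmetry between the critical bit and a fixed non-optimized bit $j$ (loss probability at most $1/2$) plus an additive $-1$ for the improving mutation. The two rigor issues you flag (re-alignment of a resolved bit by an accepted multi-bit mutation under random tie-breaking, and the bookkeeping of aligned bits leaving the non-optimized part at the phase boundary) are genuine subtleties, but the paper's own proof glosses over them in exactly the same way, so your attempt matches the paper's route rather than departing from it.
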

\noindent We now analyze how $\bar H_k$ develops during an exploitation phase. We first show in \Cref{lma:length_exploit_whp} that with high probability the progress per exploitation phase is at most $O(\sqrt \lambda \log n)$, where we measure progress as the fitness increase of the fitter of the two individuals. Afterwards, we will derive an upper bound on the increase from $\bar H_k$ to $\bar H_{k+1}$ for an exploitation phase $k$ in \Cref{lma:div-in-exploit}.
\begin{lemma}
    \label{lma:length_exploit_whp}
    Let $\gamma >1$ be constant, $c_{\text{len}} \coloneqq 256\gamma$ and assume $2\leq \lambda = o(n)$. Consider an exploitation phase $k$ with $\alpha_k n \geq n^{2/3}$. Let $L_k$ be the progress made in phase $k$, i.e., the fitness difference between the fitter search point in the first generation of phase $k$ and the fitter search point in the first generation of phase $k+1$. Then with probability $1 - o(n^{-\gamma})$, 
    \begin{align*}
        L_k \leq c_{\text{len}} \cdot \sqrt{\lambda} \log n \eqqcolon L_{\text{max}}.
    \end{align*}
\end{lemma}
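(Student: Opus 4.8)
The plan is to track the \emph{frontier} $\Phi := \max\{\LO(x^1),\LO(x^2)\}$, whose total increase over the phase is exactly $L_k$, and to show that the phase is forced to end (the two fitnesses equalize) as soon as the frontier reaches a position that is zero in both search points — a \emph{blocking bit}. The first observation is that blocking bits are permanent: biased crossover of two zero-bits always produces a zero-bit, and no mutation occurs in an exploitation phase, so once a position is zero in both individuals it stays that way, and the frontier can never advance past it. Moreover, the phase ends precisely when $\LO(x^1)=\LO(x^2)=c$, i.e.\ when position $c+1$ is blocking and the frontier sits just below it. Hence $L_k$ is bounded by the distance from the start of the phase to the first position that is blocking at the moment the frontier arrives there.

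The crux is that the relevant blocking bits are not only the ones present at the start of the phase: they are continually \emph{created} ahead of the frontier. Every accepted biased crossover replaces the less fit individual by an offspring that, at each non-optimized position independently, copies the fitter individual's bit with probability $1/\lambda$. At a position where the two individuals still differ, such a copy turns the pair into either a skipping bit or a blocking bit, depending on which individual currently holds the one-bit; by the uniformity of the non-optimized part (\Cref{lem:critical} and~\cite[Corollary~7]{cerf2024population}) the less fit individual holds it with probability $1/2$, independently of the copying coin. Thus each accepted crossover converts a given live differing position into a blocking bit with probability $\Omega(1/\lambda)$. I would make this precise by focusing on the \emph{first} crossover that copies the fitter bit at that position and arguing it lands on a blocking rather than a skipping bit with probability $1/2$.

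To turn this into the bound I would count how many accepted crossovers occur before the frontier reaches a position at distance $d$ ahead of the phase start. Each accepted crossover advances the frontier by at most the length of a run of free-riders, which is stochastically dominated by $1+\geo(1/2)$ and hence is $O(\log n)$ simultaneously for all polynomially many crossovers with probability $1-o(n^{-\gamma})$; therefore at least $\Omega(d/\log n)$ crossovers take place before the frontier reaches distance $d$. Consequently the position at distance $d$ is exposed to $\Omega(d/\log n)$ conversion attempts and is still non-blocking when the frontier arrives with probability at most $1-\Omega(d/(\lambda\log n))$. The phase survives to distance $D$ only if \emph{all} positions $1,\dots,D$ are non-blocking when reached (positions blocking at the start only help), so, conditioning on the history up to the arrival at each position, this probability is at most $\exp(-\Omega(\sum_{d\le D} d/(\lambda\log n))) = \exp(-\Omega(D^2/(\lambda\log n)))$. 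For $D = L_{\text{max}} = c_{\text{len}}\sqrt\lambda\log n$ with $c_{\text{len}}=256\gamma$ this equals $n^{-\Omega(c_{\text{len}}^2)} = o(n^{-\gamma})$, which is the claimed bound; the hypothesis $\alpha_k n \ge n^{2/3}$ guarantees that $\NO$ has room for the frontier to travel the distance $D = o(n^{2/3})$.

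The main obstacle I anticipate is making the counting argument rigorous in the presence of the dependencies introduced by the frequent swapping of the two individuals' roles: the identity of the less fit individual, the number of crossovers already performed, and the conversion events are all correlated, so the product over distances must be justified by conditioning carefully on the history (and on the good event that all free-rider runs are $O(\log n)$) rather than by naive independence. A secondary technical point is to confirm that the probability $1/2$ of landing on a blocking rather than a skipping bit — which rests on the uniformity of $\NO$ and ultimately fixes the relevant scale — survives this conditioning. Establishing the clean lower bound $\Omega(d/\log n)$ on the number of accepted crossovers before distance $d$ (equivalently, an upper bound on the per-crossover frontier advance) is the single step I expect to require the most care.
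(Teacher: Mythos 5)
Your overall mechanism is the same one the paper uses---the phase is capped by blocking bits, blocking bits are permanent under crossover, and new ones are created from differing positions at rate roughly $1/(2\lambda)$ per accepted crossover---but your bookkeeping (a running product of per-position non-blocking probabilities) has a genuine gap: it ignores positions that are \emph{skipping} bits (one in both parents) at the start of the phase. Your per-position bound ``non-blocking when reached with probability at most $1-\Omega(d/(\lambda\log n))$'' is derived from conversion attempts at \emph{differing} positions; for an initially-skipping position there are no conversion attempts at all (copying a one onto a one changes nothing), such a position can never become blocking, and the true probability is $1$. Your parenthetical ``positions blocking at the start only help'' covers initially-aligned positions only halfway: each aligned position is blocking or skipping with probability $1/2$ each, and the skipping half neither helps nor can ever be converted. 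Since the lemma must hold for arbitrary $\bar H_k$ (it is an ingredient in proving \Cref{thm:main}, so you cannot assume diversity is already high), the regime where most of $\NO$ is aligned---hence potentially rich in skipping bits---cannot be excluded, and there your product bound $\exp(-\Omega(D^2/(\lambda\log n)))$ is unjustified.

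The gap is repairable, and the repair essentially reproduces the paper's case split. The paper distinguishes $\bar H_k \ge \alpha_k n/2$ (many aligned positions, so by a Chernoff bound a constant fraction of all positions are blocking \emph{already at the start}, and negative association gives the bound $(1-p_B)^{L_{\text{max}}}$) from $\bar H_k < \alpha_k n/2$ (many differing positions, where the conversion mechanism you describe is run for the first $\tau = \lambda L_{\text{max}}/16$ generations to create $\psi$ new blocking bits, after which the remaining $L_{\text{max}}/2$ positions avoid all of them with probability at most $(1-\psi/(\alpha_k n))^{L_{\text{max}}/2}$). In your framework the fix is: assign each position the factor $1/2$ if it is initially aligned and your conversion factor if it is initially differing; then among the first $D$ positions either at least $D/2$ are aligned (product at most $2^{-D/2}$) or at least $D/2$ are differing (product at most $\exp(-\Omega(D^2/(\lambda\log n)))$), and both are $o(n^{-\gamma})$ for $c_{\text{len}}=256\gamma$. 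Note also that the paper's two-stage structure---first fix the number of blocking bits created, then test the positions ahead against them---is precisely what lets it avoid the per-arrival-time conditioning that you correctly identify as the delicate point; your running product would require that conditioning argument (including that the $1/2$ orientation probability and the count $\Omega(d/\log n)$ of accepted crossovers survive conditioning on the history) to be spelled out in full.
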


    \begin{proof}
    The phase ends as soon as the algorithm reaches a blocking bit as the next critical bit, as defined in \Cref{sec:prelim}. Recall that $B_k$ is the number of blocking bits and $S_k$ is the number of skipping bits in the non-optimized part of the bit-string. Notice that $\bar H_k = B_k + S_k$ and $\EE[S_k \cond \bar H_k] = \EE[B_k\cond \bar H_k] = \frac{1}{2} \bar H_k$, because each bit in $\bar H_k$ independently has a chance of $1/2$ to be either a skipping or a blocking bit. We will distinguish two cases.\\

    \noindent \textbf{Case 1:} $\bar H_k \geq \frac{\alpha_k n}{2}$.\\
    To show an upper bound on the duration of the exploitation phase we will need to show that it is very unlikely to have substantially more $S_k$ than $B_k$ bits. Notice that any bit among $\bar H_k$ comes from $S_k$ or $B_k$ independently with probability $1/2$, so $\EE[B_k] \ge \alpha_k n/4$. Applying a Chernoff bound for $\alpha_k = \Omega(n^{-1/3})$ yields
    \begin{align*}
        \Pr \lbr B_k < \frac{\alpha_k n}{8} \rbr \leq \Pr \lbr B_k \leq \big( 1- \tfrac 1 2 \big)\E{B_k}\rbr
        \leq  \exp{\lb  - \frac{\alpha_k n}{32}\rb} = o(n^{-\gamma}).
    \end{align*}
    In the following let $B_k^\geq$ be the event "$B_k \geq \frac{\alpha_k n}{8}$" and $L_k^>$ be the event "$L_k > L_{\text{max}}$". Then we can bound $\Pr[ L_k^> ]$ by
    \begin{align*}
        \Pr[ L_k^> ] = & \Pr[ L_k^> \cond B_k^\ge ]\cdot \Pr [ B_k^\ge ] + \Pr[ L_k^> \cond \neg B_k^\ge ] \cdot \Pr [ \neg B_k^\ge ]\\
        \le & \ \Pr[ L_k^> \cond B_k^\ge ] + \Pr [ \neg B_k^\ge ] \le \Pr[ L_k^>\cond B_k^\ge ] + o(n^{-\gamma}).
    \end{align*}
    To bound $\Pr[ L_k^>\cond B_k^\ge]$, note that $L_k^>$ implies that there is no blocking bit among the next $L_{\max}$ bits after the critical bit of the beginning of phase $k$. We notice that for each bit we have a blocking-probability of $p_B = B_k/(\alpha_k n) \ge 1/8$. Moreover, blocking bits are negatively associated (for a fixed total number, having a blocking bit in position $i$ decreases the chance of having a blocking bit at position $j\neq i$), hence the bound $\Pr [ L_k^> \mid B_k^{\ge}] \le \lb 1- p_B \rb^{L_{\text{max}}}$  applies~\cite[Property~P2]{joag1983negative}. Not finding a blocking bit in $L_{\text{max}}$ generations is therefore unlikely:
    \begin{align*}
        \Pr [ L_k^> \mid B_k^{\ge} ] \le \lb 1- p_B \rb^{L_{\text{max}}} \le e^{-p_B\cdot   L_{\text{max}}} \stackrel{\lambda \ge 1}{\le} n^{-32\gamma} = o(n^{-\gamma}).
    \end{align*}

    \noindent \textbf{Case 2:} $\bar H_k < \frac{\alpha_k n}{2}$.\\
    For this case we know that there are at least $H_k = \alpha_k n - \bar H_k \geq \alpha_k n/2$ diverse bits, i.e. bits that differ between $x^1$ and $x^2$. We will study the development of $B_k$ throughout the first $\tau = \lambda L_{\max}/16$ generations of phase $k$. We will show that after $\tau$ generations enough blocking bits will have been generated to stop the phase w.h.p. in further $\tau$ generations. Note that since this phase only uses crossover, blocking bits can only be generated, not destroyed.

    Let us first show that w.h.p.\! the progress in the first $\tau$ generations is at most $L_{\max}/2$. In each generation, the offspring is accepted if and only if the improving bit is copied from the fitter search point $x^2$ to $x^1$, which happens with probability $1/\lambda$. Let us call $\sigma$ the number of accepted offspring in the first $\tau$ generations of the phase. Then $\EE[\sigma] = L_{\max}/16$, and by the Chernoff bound $\Pr[\sigma > L_{\max}/8] \le \exp(-L_{\max}/48) = o(n^{-\gamma})$. Hence, we may assume $\sigma \le L_{\max}/8$. Those are the only generations in which progress can be made, namely if the offspring $z$ is strictly fitter than both $x^1$ and $x^2$. In this case, each subsequent bit of $z$ after the critical bit has probability $1/2$ to be a one-bit, hence the progress in that generation is dominated by $1+\geo(1/2)$, where $\geo$ denotes a geometric distribution. (Analogously to the argument for diversity phases in Section~\ref{sec:prelim}.) Hence, the expected progress with the first $\sigma \le L_{\max}/8$ accepted offspring is at most $L_{\max}/4$, and the probability that it exceeds $L_{\max}/2$ is at most $\exp(-L_{\max}/16) = o(n^{-\gamma})$ because the sum of geometric random variables is concentrated~\cite[Theorem~1.10.32]{doerr2020probabilistic}. Hence, we may assume that the progress in the first $\tau$ generations is at most $L_{\max}/2$. 

    Let us pessimistically assume that the phase does not end in the first $\tau$ generations, since otherwise we are done. We write $B_{k,\tau}$ for the number of blocking bits after $\tau$ generations.  Then as argued above, the expected number $\EE[\sigma]$ of accepted offspring is $\tau/\lambda = L_{\max}/16$, and again by the Chernoff bound, we have $\sigma \ge L_{\max}/32$ with probability at least $1-\exp(-L_{\max}/256) = 1-o(n^{-\gamma})$. Hence, we may assume that at least $\sigma \ge L_{\max}/32$ offspring are accepted. Let us call this event $\calE_\sigma$.
    
    We will now study how many new blocking bits are generated under this assumption in the first $\tau$ generations. Let us call this number $B_{\text{new}}$. Consider a diverse pair of bits $(0,1)$ or $(1,0)$. By the condition $\bar H_k < \alpha_k n /2$, there are at least $\alpha_k n/2$ such pairs at the beginning of the phase. In a generation with accepted offspring the crossover operator copies each bit from $x^2$ to $x^1$ with probability $1/\lambda$, thus this is the probability of create equal bits $(0,0)$ or $(1,1)$. Ending up with equal values (either $(0,0)$ or $(1,1)$) in $\tau$ generations therefore happens with probability $1 - ( 1 - 1/\lambda )^{\sigma}$. Moreover, since the two options $(1,0)$ and $(0,1)$ for this position were equally likely to start with, the values $(0,0)$ and $(1,1)$ are also equally likely in the end. This yields an additional factor of $1/2$ that the position yields a stopping bit. Hence, the expected number of new blocking bits is
    \begin{align*}
        \EE[B_{\text{new}} \cond \calE_\sigma] & \ge \frac{\alpha_k n}{2}\cdot \Big( 1 - ( 1 - \tfrac{1}{\lambda} )^{L_{\max}/32} \Big)\cdot \frac{1}{2} \\
        &  
        \begin{cases}
        \stackrel{\eqref{eq:exp-bounds}}\ge \frac{\alpha_k n}{4}\cdot \frac{L_{\max}}{64\lambda} = \frac{\gamma \alpha_k n \log n}{\sqrt{\lambda}} \eqqcolon \psi & \text{if } L_{\max} \le 16\lambda, \\
        \ge \frac{\alpha_k n}{4}\cdot (1-e^{-1/2}) \eqqcolon \psi &  \text{if } L_{\max} > 16\lambda.
        \end{cases}
    \end{align*}
    Moreover, conditioned on $\calE_\sigma$, the random variable $B_{\text{new}}$ dominates the sum of $H_k$ independent indicator random variables, one for each position, where they are independent because their starting values are independent and crossover operates independently on them. Hence, $B_{\text{new}}$ dominates a binomial random variable with expectation $\psi$. By the Chernoff bound, we have $\Pr[B_{\text{new}} < \psi/2] \le \exp(-\Omega(\psi)) \le \exp(-\Omega(n^{1/6})) = o(n^{-\gamma})$, since $\lambda = o(n)$ and $\alpha_k = \Omega(n^{-1/3})$. Therefore, from now on we may assume $B_{\text{new}} \ge \psi/2$. Let us call this event $\calB_{\text{new}}^\geq$.

    Let us call $L_k^2$ the progress in phase $k$ after the first $\tau$ generations. It remains to show that conditional on $\calB_{\text{new}}^\geq$, we have $L_k^2 \le L_{\max}/2$ w.h.p. Note that the converse can only happen if none of the next $L_{\max}/2$ positions after the critical position is a blocking bit. We proceed similarly as in Case~1. 
    Conditional on $\calB_{\text{new}}^\geq$ each position has a probability of at least $\psi/(\alpha_k n)$ to be a blocking bit. As in Case~1, the blocking bits are negatively associated. Therefore, the probability that none of the next $L_{\max}/2$ bits are blocking bits is at most $(1-\psi/(\alpha_k n))^{L_{\max}/2}$, and hence
    \begin{align}
        \Pr\big[L_k^2 > L_{\max}/2 \cond \calB_{\text{new}}^\geq\big] \le \lb 1-\tfrac{\psi}{\alpha_k n}\rb^{L_{\max}/2}.\label{eq:Lk2}
    \end{align}
    If $L_{\max} \le 16\lambda$ then we have $\psi/(\alpha_k n) = \gamma \log n /\sqrt{\lambda}$, and together with $L_{\max} = 256\gamma\sqrt{\lambda} \log n$  the right hand side of~\eqref{eq:Lk2} is at most $\exp(- 128\gamma^2 \log^2 n) = o(n^{-\gamma})$. If $L_{\max} > 16\lambda$ then $\psi/(\alpha_k n)  = (1-e^{-1/2})/4 \ge 1/12$, and the statement follows from $(1-1/12)^{L_{\max}/2}\le \exp(-10\gamma\sqrt{\lambda} \log n) =o( n^{-\gamma})$. This concludes the proof.
    \end{proof}

\noindent Next, we apply \Cref{lma:length_exploit_whp} to derive an upper bound on the expected increase from $\bar H_k$ to $\bar H_{k+1}$.
\begin{lemma}
    \label{lma:div-in-exploit}
    Let $\gamma >1$,  $c_{\text{len}} = 256\gamma$ and $2\leq \lambda = o(n)$. Let phase $k$ be an exploitation phase with $\alpha_k n = \Omega(n^{2/3})$. Then
    \begin{align*}
        \E{\bar H_{k+1} - \bar H_{k}} \leq 5c_{\text{len}} \cdot \alpha_k n \cdot \log n/\sqrt{\lambda} .
    \end{align*}
    \begin{proof}
        As in the proof of \Cref{lma:length_exploit_whp} let $L_k^\leq, L_k^>$ be the events of staying within or exceeding $c_{\text{len}} \cdot \sqrt \lambda \log n$ generations in phase $k$ respectively. Also, let $\Delta \bar H_k \coloneqq \bar H_{k+1} - \bar H_k$. First we notice that a maximum of $\alpha_k n$ bits can contribute to $\Delta \bar H_k$. Therefore,
        \begin{align}
            \EE[\Delta \bar H_k]  & = \EE[\Delta \bar H_k \cond L_k^\leq] \cdot \Pr[ L_k^\leq ] + \EE[\Delta \bar H_k \cond L_k^>] \cdot \Pr[ L_k^> ] \label{eq:negligible_error}\\
            & \leq  \EE[\Delta \bar H_k \cond L_k^\leq]+ \alpha_k n \cdot o(n^{-\gamma}) \leq \EE[\Delta \bar H_k \cond L_k^\leq] + o(n^{1-\gamma}).\nonumber
        \end{align}
        We can choose any $\gamma > 1$, so the latter summand gives us a contribution of only $o(1)$. From now on we will study $\EE[\Delta \bar H_k \cond L_k^\leq]$. 

        Conditioned on $L_k^\leq$, we want to bound the number of improving offspring. Note that every improving offspring increases the fitness of the worse search point by at least one. By slight abuse of notation, let us call $x_k^1$ and $x_k^2$ the two search points at the beginning of the phase. Then $x^1_k$ can improve at most to $\LO(x^2_k)+L_k$, since this is the maximal fitness obtained in this phase. Hence, the number of improving offspring is bounded by $L_k+\LO(x^2_k)-\LO(x^1_k)$. Note that the difference $\LO(x^2_k)-\LO(x^1_k)$ was created by free riders (bits that were 1 by chance) in the generation in which $x^2$ was created as new offspring. Thus it is distributed as $1+\geo(1/2)$ and is bounded by $2\log_2 n$ with probability at least $1-1/n^2$. By the same calculation as in~\eqref{eq:negligible_error}, the error event is so unlikely that it is negligible, and we may thus assume that the number of improving offspring is at most $L_{\max}+2\log_2 n \le 2L_{\max}$ when $n$ is large enough.
        
        We now use a similar calculation as in the proof of \Cref{lma:length_exploit_whp}, but now we compute an upper bound (instead of a lower bound) for the probability of turning two diverse bits into identical bits. Note that here we do not care whether the result is a blocking bit or skipping bit. We call $p_{\text{id}}$ this probability of turning a fixed diverse pair into an identical one in the $k$-th phase. By using \Cref{eq:exp-bounds} and $\lambda \geq 2$ we find
        \begin{align*}
            p_{\text{id}} \leq 1 - \big( 1- \tfrac 1 \lambda \big)^{2L_{\max}} \leq 1 - \exp( - 4L_{\max}/\lambda ) \leq 4L_{\max}/\lambda.
        \end{align*}
        Therefore, recalling the definition of $L_{\max}$ from \Cref{lma:length_exploit_whp},
        \begin{align*}
            \EE\big[\Delta \bar H_k \cond L_k^\geq\big] \leq \alpha_k n \cdot p_{\text{id}} \leq \alpha_k n \cdot  4 c_{\text{len}}  \sqrt{\lambda} \log n/\lambda .
        \end{align*}
        We conclude $\E{\Delta \bar H_k} \leq 4c_{\text{len}} \cdot  \alpha_k n \log n/\sqrt \lambda + o(1)$ in total. 
    \end{proof}
\end{lemma}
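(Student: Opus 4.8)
The plan is to feed \Cref{lma:length_exploit_whp} into a conditioning argument. I would write $\Delta\bar H_k := \bar H_{k+1}-\bar H_k$ and split its expectation over the event $L_k^{\le}$ that the phase is short ($L_k\le L_{\max} = c_{\text{len}}\sqrt\lambda\log n$) and its complement $L_k^{>}$, which by \Cref{lma:length_exploit_whp} has probability $o(n^{-\gamma})$. Using the trivial deterministic bound $\Delta\bar H_k\le\bar H_{k+1}\le\alpha_k n$ (since $\bar H_{k+1}$ counts alignments only within the smaller set $\NO_{k+1}\subseteq\NO_k$), the contribution of $L_k^{>}$ to $\E{\Delta\bar H_k}$ is at most $\alpha_k n\cdot o(n^{-\gamma})=o(n^{1-\gamma})=o(1)$ for any fixed $\gamma>1$, so it remains to bound $\E{\Delta\bar H_k\cond L_k^{\le}}$.

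Conditioned on $L_k^{\le}$, I would first cap the number of \emph{accepted} (improving) offspring, as only these can align new bits. Each accepted offspring raises the fitness of the worse search point by at least one, and the worse point can climb at most to $\LO(x^2_k)+L_k$; thus the number of accepted offspring is at most $L_k+(\LO(x^2_k)-\LO(x^1_k))$. The initial gap $\LO(x^2_k)-\LO(x^1_k)$ was produced purely by free riders in the generation that created $x^2_k$, so it is distributed as $1+\geo(1/2)$ and is at most $2\log_2 n$ except with probability $O(n^{-2})$. That rare event is again absorbed into an $o(1)$ error via the same crude bound $\Delta\bar H_k\le\alpha_k n$, so I may assume at most $L_{\max}+2\log_2 n\le 2L_{\max}$ accepted offspring.

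For the core estimate I use that during an exploitation phase biased crossover only copies bits of $x^2$ onto $x^1$, so the number of aligned pairs within the fixed set $\NO_k$ is non-decreasing; hence $\Delta\bar H_k$ is at most the number of positions of $\NO_k$ that turn from diverse to aligned during the phase. I then fix one such diverse position and bound the probability $p_{\text{id}}$ that it becomes aligned. In each accepted generation the crossover copies that position from $x^2$ to $x^1$ independently with probability $1/\lambda$, so over at most $2L_{\max}$ accepted offspring, $p_{\text{id}}\le 1-(1-1/\lambda)^{2L_{\max}}\le 4L_{\max}/\lambda$ by \eqref{eq:exp-bounds} and $\lambda\ge 2$. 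Since at most $\alpha_k n$ positions can so transition, linearity of expectation gives $\E{\Delta\bar H_k\cond L_k^{\le}}\le\alpha_k n\,p_{\text{id}}\le 4c_{\text{len}}\,\alpha_k n\log n/\sqrt\lambda$. Adding back the $o(1)$ error and using that $\alpha_k n\log n/\sqrt\lambda\to\infty$ under $\alpha_k n\ge n^{2/3}$ and $\lambda=o(n)$, the error is absorbed into the extra factor, yielding the claimed bound $5c_{\text{len}}\,\alpha_k n\log n/\sqrt\lambda$.

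The main obstacle is organizing the conditioning so that every low-probability escape event—an atypically long phase and atypically many free riders—is controlled only by the crude bound $\Delta\bar H_k\le\alpha_k n$, yet still contributes merely $o(1)$ thanks to the slack $\gamma>1$; the per-pair copy probability must then be evaluated under the cap of $2L_{\max}$ accepted offspring rather than over all generations of the phase. A helpful simplification, in contrast to \Cref{lma:length_exploit_whp}, is that here I only need an \emph{upper} bound on $\Delta\bar H_k$, so it is irrelevant whether a newly aligned pair becomes a blocking or a skipping bit; each accepted generation then contributes a single copy probability $1/\lambda$ per position, which is all the per-position calculation requires.
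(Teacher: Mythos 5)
Your proposal is correct and follows essentially the same route as the paper's proof: the same split over $L_k^{\le}$ versus $L_k^{>}$ with the crude bound $\Delta \bar H_k \le \alpha_k n$ absorbing both low-probability events, the same cap of $2L_{\max}$ on accepted offspring via the $1+\geo(1/2)$ free-rider gap, and the same per-position bound $p_{\text{id}} \le 1-(1-1/\lambda)^{2L_{\max}} \le 4L_{\max}/\lambda$ combined by linearity of expectation. Your added remarks (that alignments in $\NO_k$ are monotone under crossover, that bit copies at non-improving positions are independent of acceptance, and the explicit absorption of the $o(1)$ term into the slack between $4c_{\text{len}}$ and $5c_{\text{len}}$) only make explicit what the paper leaves implicit.
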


Now that we know how the diversity develops in the two respective phases we can finally prove \Cref{thm:main}.
\begin{proof}[Proof of \Cref{thm:main}]
Fix  arbitrarily $\gamma \coloneqq 2$ (any other $\gamma >1$ would also work). Choose $c>0$ large enough such that $(1-\tfrac{1}{4e^2})(1+5c_{\text{len}}/c) < 0.98$, where $c_{\text{len}}$ is the constant from \Cref{lma:length_exploit_whp}.

We first prove by induction over $k$ that 
\begin{align}\label{eq:induction_hypothesis}
    \E{\bar H_k} \leq c \cdot \alpha_k n \log n/\sqrt{\lambda}
\end{align}
holds at the beginning of every exploitation phase, where we do the inductive step from $k$ to $k+2$, i.e., we consider the combined effect of an exploitation and a diversity phase. Afterwards, we show that it also holds in between and with a index-shifted $\alpha$ if we increase the factor $c$, i.e., we show for some constant $c'>0$
\begin{align}\label{eq:in_between}
    \E{\bar H_t} \leq c' \cdot \alpha_{t-1} n \log n/\sqrt{\lambda}
\end{align}
for all times $t$ for which $\alpha_{t-1} \ge n^{-1/3}$. Note that this is the statement of \Cref{thm:main}.

For the start of the induction, note from \Cref{algo:dega} that the algorithm always starts with two antipodal search points, i.e. at maximum diversity. Moreover, this implies that initially $\LO(x^1) \neq \LO(x^2)$, so the first phase is an exploitation phase and $\bar H_k = 0$ for $k=0$. This establishes the base case of the induction.

For the inductive step, assume that~\eqref{eq:induction_hypothesis} holds for some even $k\ge 0$. Recall $L_{\text{max}}$ from \Cref{lma:length_exploit_whp} and let $\calL= L_{k}^\leq \cap L_{k+1}^\leq$  be the event that $L_k \le L_{\text{max}}$ and $L_{k+1} \le L_{\text{max}}$.  Then $\Pr[L_{k}^\leq] = 1-O(n^{-2})$ by \Cref{lma:length_exploit_whp}, and $\Pr[L_{k+1}^\leq] = 1-O(n^{-2})$ since the progress in a diversity phase is distributed as $1+\geo(1/2)$ and is bounded by $2\log_2 n$ with probability at least $1-1/n^2$. Hence, $\Pr[\calL] = 1-O(n^{-2})$. Since $\bar H_{k+2} \in [0,n]$,
    \begin{align}
        \EE[\bar H_{k+2}] & = \EE[\bar H_{k+2} \cond \calL]\cdot\Pr[\calL] + \EE[\bar H_{k+2} \cond \neg \calL]\cdot\Pr[\neg \calL] \label{eq:negligible_error2}\\
        & = \EE[\bar H_{k+2} \cond \calL] (1-O(n^{-2}))+o(1) 
        = \EE[\bar H_{k+2} \cond \calL] \pm o(1).\nonumber
    \end{align}
In the following we will thus compute $\EE[\bar H_{k+2} \cond \calL]$. Note that by the same calculation as~\eqref{eq:negligible_error} and~\eqref{eq:negligible_error2}, Lemmas \ref{lma:div-in-div-phase} and \ref{lma:div-in-exploit} also hold when conditioning on $\calL$, at the cost of an additive $\pm o(1)$ error term. Hence \Cref{lma:div-in-exploit} yields
    \begin{align} \label{eq:change-with-one-phase}
        \EE[\bar H_{k+1}\cond \calL] 
        & \le \EE[H_{k} \cond \calL]+ 5c_{\text{len}} \cdot \alpha_k n  \log n/\sqrt{\lambda} +o(1)\nonumber \\
        & \stackrel{\eqref{eq:induction_hypothesis}}{\le} (c+5c_{\text{len}})\alpha_k n  \log n/\sqrt{\lambda} + o(1),
    \end{align}
and by \Cref{lma:div-in-div-phase},
    \begin{align}\label{eq:change-with-two-phases}
        \EE[\bar H_{k+2}] &\le  
        \EE[\bar H_{k+2}\cond \calL] +o(1) \le (1-\tfrac{1}{4e^2})\EE[\bar H_{k+1} \cond \calL]+ o(1)\nonumber \\
        & \le (1-\tfrac{1}{4e^2})(c+5c_{\text{len}})\alpha_k n  \log n/\sqrt{\lambda} + o(1)\nonumber\\
        & \le 0.99c \cdot \alpha_k n \log n/\sqrt{\lambda},
    \end{align}
where we absorbed the $o(1)$ term into the constant in the last term. This is almost the desired inductive bound, except that we need $\alpha_{k+2}$ instead of $\alpha_k$. However, conditional on $\calL$ and for large enough~$n$,
\begin{align*}
\alpha_{k+2}n = \alpha_kn -L_k-L_{k+1} \ge \alpha_k n - 2L_{\max} \ge 0.99\alpha_k n,
\end{align*}
because $L_{\max} = O(\sqrt{n}\log n)$ and $\alpha n = \Omega(n^{2/3})$. This implies that $0.99\alpha_k \le \alpha_{k+2}$. Hence we may continue \eqref{eq:change-with-two-phases} as 
    \begin{align*}
        \EE[\bar H_{k+2}] &\le \EE[\bar H_{k+2}\cond \calL] +o(1)\le c \cdot\alpha_{k+2} n  \log n/\sqrt{\lambda},
    \end{align*}
which yields \eqref{eq:induction_hypothesis} for $k+2$. By induction, this shows the statement for all even $k$. For the remaining points in time, which are not the start of an exploitation phase, we use very similar arguments. Equation~\eqref{eq:change-with-one-phase} tells us that after the $k$-th phase, $k$ even, we have $\EE[\bar H_{k+1}] = O(\alpha_k n\log n /\sqrt{n})$. The same argument also shows that $\EE[\bar H_{t}] = O(\alpha_k n\log n /\sqrt{n})$ for all times $t$ during the $k$-th phase. For the $(k+1)$-th phase, since it is a diversity phase $\bar H$ can only decrease during this phase, so we also have $\EE[\bar H_{t}] \le \EE[\bar H_{k+1}] = O(\alpha_k n\log n /\sqrt{n})$ for all times $t$ during the $(k+1)$-th phase. As before $\alpha$ only changes by a $(1-o(1))$ factor during one phase, hence we also obtain $\EE[\bar H_{t}] = O(\alpha_t n\log n /\sqrt{n})$ for all $t$ in either phase $k$ or phase $k+1$, at the expense of a slightly larger hidden constant $c'$. Therefore, the statement~\eqref{eq:in_between} for all times $t$ follows.
\end{proof}


\section{Experiments}\label{sec:experiments}
This section is devoted to an empirical evaluation of DEGA to (i) validate our theoretical findings on \textsc{LeadingOnes}, (ii) discuss variants of the DEGA, and (iii) assess performance on classical benchmarks. We investigate the performance on \textsc{OneMax} (\textsc{OM}), \textsc{LeadingOnes} (\textsc{LO}), \emph{Linear Functions} with harmonic weights \emph{(LFHW)} and the \emph{Maximum Independent Vertex Set (MIVS)}. Also, runtimes are compared to the established \tpoga, \ollga and \UMDA~\cite{muhlenbein1997equation}. Details on the parameter settings and benchmarks follow below.
\smallskip

\noindent \textbf{Setup.} For \textsc{LO}, \textsc{OM} and \emph{LFHW}, we study the number of evaluations before finding the optimum (``runtime''). For \emph{MIVS} we have to benchmark slightly differently, as we mostly do not reach the global optimum for any algorithm. The setup for \emph{MIVS} is described in the supplement. 
We therefore focus on the setup for the other benchmarks: All experiments are based on $50$ independent runs per problem size. We generate $10$ log-spaced problem sizes in a range $[n_{\text{start}} = 100, n_{\text{end}}]$. The value of $n_{\text{end}}$ may vary for different benchmarks (since simulation on \textsc{LO} is slower than on \textsc{OM}). We denote by $\bar{T}(n)$ the mean runtime (number of fitness evaluations) and by $\tilde{T}(n)$ the median runtime; error bars or intervals indicate one standard deviation where applicable. We normalize the average runtimes $\bar T(n)$ by $n^2$ (\textsc{LO}), $n\log n$ (\textsc{OM}) and $n\log n$ (\emph{LFHW}). For reproducibility, our implementation and data are available on GitHub at \url{https://github.com/FOGA2025-DEGA/DEGA}.\smallskip

\noindent \textbf{\textsc{LeadingOnes} Asymptotics}
We begin by testing the algorithm as described in \Cref{sec:algo}. In \Cref{fig:lambda-asymptotics}, the average optimization time, divided by $n^2$, is shown for a range of $\lambda$'s. We plot for a given $\lambda_i$, $\bar T(n)$ as well as the runtime guarantee from \Cref{thm:mainresult} for that specific $\lambda_i$. Interestingly, the fit is remarkably tight for the proof-optimal $\lambda_1= (n\log n)^{2/3}$, despite our bound not being tight for a given $\lambda$. This can be explained by the structure of the runtime, which is a sum of two terms. 

Recall
$
    \EE[T] = O ( \lambda n + n^2\log n/\sqrt \lambda )
$. 
The $\lambda n$-term is asymptotically tight, so even if the $n^2\log n /\sqrt \lambda$-term is overestimated, choosing $\lambda = (n\log n)^{2/3}$ to theoretically balance the two terms will yield asymptotic time $\Theta(n^{5/3}(\log n)^{2/3})$. Therefore, we would expect that for $\lambda \ge (n\log n)^{2/3}$ our asymptotic prediction is correct, but for $\lambda = o( (n\log n)^{2/3})$ the proven bound may overestimate the actual runtime. This is indeed what we find, as for any $\lambda_i, i > 1$ we have $\lambda_i = o(\lambda_1)$ and the runtime guarantees are overestimating the empirical runtimes in the figure.
\begin{figure*}[t]  
  \centering
  \includegraphics[width=\textwidth]{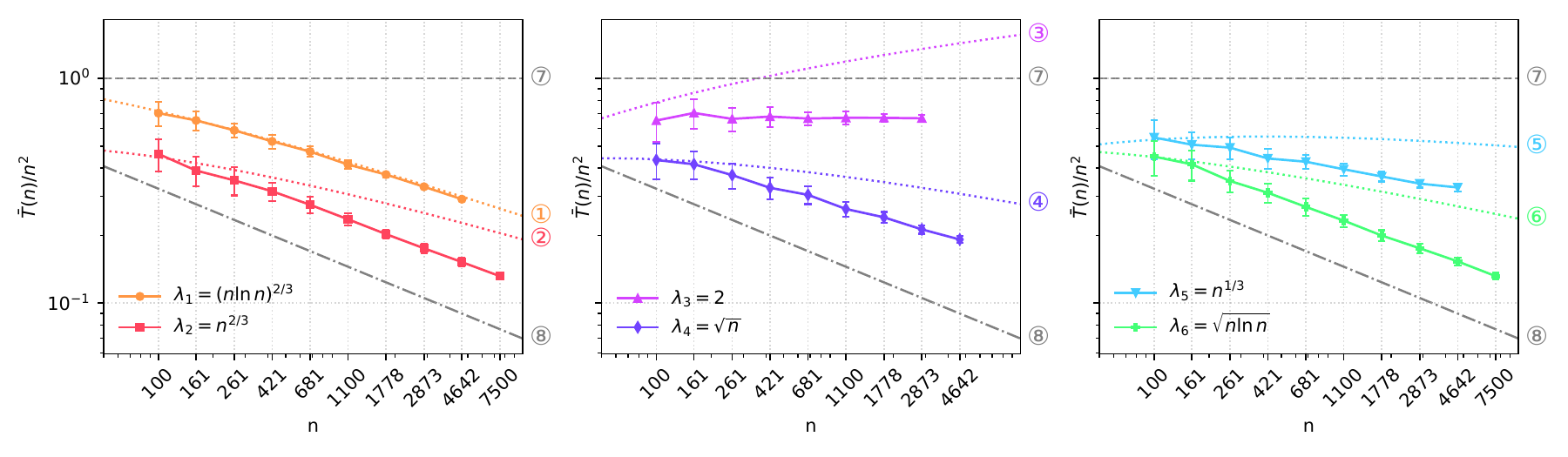}
  \caption{Log–log plot of mean runtime of the \todega for different $\lambda$’s on \leadingones. 
  Dashed lines \raisebox{.5pt}{\textcircled{\raisebox{-.7pt}{1}}}–\raisebox{.5pt}{\textcircled{\raisebox{-.7pt}{6}}}
  show the runtime guarantees from \Cref{thm:mainresult} for the respective $\lambda_i$’s.
  \raisebox{.5pt}{\textcircled{\raisebox{-.7pt}{7}}} and
  \raisebox{.5pt}{\textcircled{\raisebox{-.7pt}{8}}}
  mark the functions $n^2$ and $n^{5/3}$ for comparison. 
  Bounds are scaled by suitable constants for visibility.}
  \label{fig:lambda-asymptotics}
\end{figure*}
We also observe that several values of $\lambda$ that are smaller than the proof-optimal value, namely $\lambda_2, \lambda_4, \lambda_6$ exhibit similar performance on the log-log plot. Additionally, we ran the algorithm with $\lambda = 2$. It is clear that $\lambda = 2$ causes the algorithm to closely resemble a vanilla \tpoga. Unsurprisingly, the runtime is therefore not sub-quadratic for $\lambda = 2$. For all $\lambda_i$ (from \Cref{fig:lambda-asymptotics}), we $\log$-transformed $x_j = \log n_j$ and $y_j = \log T(n_j)$ and performed linear regression on $(X,Y) = (x_j, y_j)_j$ to approximate the polynomial degree of the runtime for a given $\lambda_i$. We skipped the first four $n$ values to reduce the influence of the standard deviation. The best fitting $a$ for $Y = aX + b$ are displayed in \Cref{tab:lambda_a_values}. Note that this does not contradict our claim of a $ O(n^{5/3}\log n) = O(n^{1.\bar{66}} \log n)$ runtime. The $\log$-term still influences the slope calculated by the regression. Considering a larger range of $n$'s will result in the $\log$-transformed data not being  linear. \smallskip

\begin{table}[ht]
\centering
\begin{tabular}{|c|c|c|c|c|c|c|c|}
\hline
& $\lambda_1$ & $\lambda_2$ & $\lambda_3$ & $\lambda_4$ & $\lambda_5$ & $\lambda_6$ \\ \hline
$ a$ & $1.749 $       & $1.695$       & $2.001$       & $1.766$       & $1.859$       & $1.706$       \\ \hline
\end{tabular}
\caption{Empirical polynomial degree of the runtime for different $\lambda$.} 
\label{tab:lambda_a_values}
\end{table}

We now turn our attention to other function classes and examine how $\dega$ performs against other algorithms. To this end, we also discuss variations of the standard \todega, which we call $A$ from now on. A version $A'$, described in \Cref{fig:A'}, includes the ideas discussed in Section~\ref{sec:intro} to make the algorithm more robust. Note that in case of mutation, the offspring can only replace its parent but never the other point, even if it is fitter. $A'$ uses an adaptive $\lambda = H(x,y)$ as bias, but still only exchanges one bit in expectation, and we suspect that an option of exchanging more bits would be beneficial on benchmarks like MIVS and \jump.




\begin{figure*}[t]
  \centering
  \begin{adjustbox}{width=\textwidth,center}
    \begin{tikzpicture}[node distance=2.5cm, scale=1, transform shape]
      \node (start) [startstop] {Start $P = \{x,\bar{x}\}$};
      \node (repeat) [process, fill=mymagenta!50, right of=start, xshift=0.5cm, align=center]
            {Until Convergence\\ $x^1,x^2 \leftarrow P$};
      
      \node (mutation) [process, right of=repeat, yshift=1cm, xshift=1cm]
            {Select $x' \in P$ u.a.r.};
      \node (mutation_description) [right of=repeat, yshift=0.25cm, xshift=1cm]
            {Let $x'' \in P \backslash \{x'\}$};
      \node (perform_mutation) [process, right of=mutation, xshift=1cm]
            {$y \leftarrow$ mutate $x'$};
      \node (replace_mutation) [process, right of=perform_mutation, xshift=2.25cm, align=center]
            {Replace $x' \leftarrow y$ if $f(y) > f(x')$ or \\ 
             $f(y) = f(x') \land H(y, x'') > H(x',x'')$};
      
      \node (co) [process, right of=repeat, yshift=-1.5cm, xshift=1.5cm]
            {$y \leftarrow$ Crossover($x^1$, $x^2$, $1/2$)};
      \node (co_description) [right of=repeat, yshift=-0.75cm, xshift=1.5cm]
            {Let $x' \in P$ with $f(x') = f_{\text{min}}$};
      
      \node (compare) [decision, right of=co, xshift=1.5cm]
            {$f(y) > f(x')$};
      \node (hamming) [right of=compare, yshift=1cm, xshift=2.5cm]
            {$h = H(x',y)$};
      \node (loop) [process, right of=compare, xshift=2.5cm, align=left]
            {For $i = 1, \dots, h \log(n)$\\
             $\quad z \leftarrow$ Crossover($x'$, $y$, $1/h$)\\
             $\quad$if $f(z) > f(x')$ then $x' \leftarrow z$ and \textbf{break}};
      
      \draw [arrow] (start.east)  -- (repeat.west);
      \draw [arrow] (repeat) |- (mutation.west) node[midway, above] {w.p. 1/2};
      \draw [arrow] (mutation.east)  -- (perform_mutation.west);
      \draw [arrow] (perform_mutation.east)  -- (replace_mutation.west);
      \draw [arrow] (replace_mutation.east)  -- ++(1,0) |- (repeat.east);
      
      \draw [arrow] (repeat) |- (co.west) node[midway, below] {w.p. 1/2};
      \draw [arrow] (co.east)  -- (compare.west);
      \draw [arrow] (compare.north) |- (repeat.east) node[near start, right] {No};
      \draw [arrow] (compare.east) -- (loop.west) node[near start, above] {Yes};
      \draw [arrow] (loop.east)  -- ++(0.5,0) |- (repeat.east);
    \end{tikzpicture}
  \end{adjustbox}
  \caption{Flowchart of the $A'$ variant of the $\todega$.}
  \label{fig:A'}
\end{figure*}
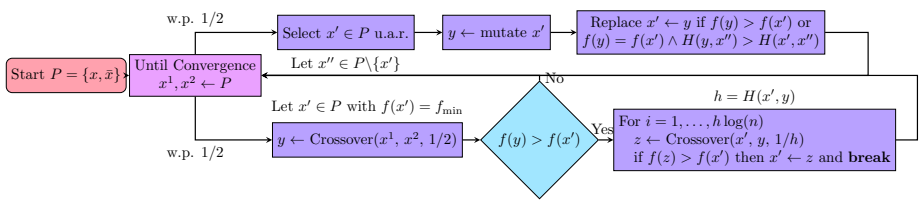

\begin{figure}[ht]
    \centering
    \includegraphics[width=\linewidth]{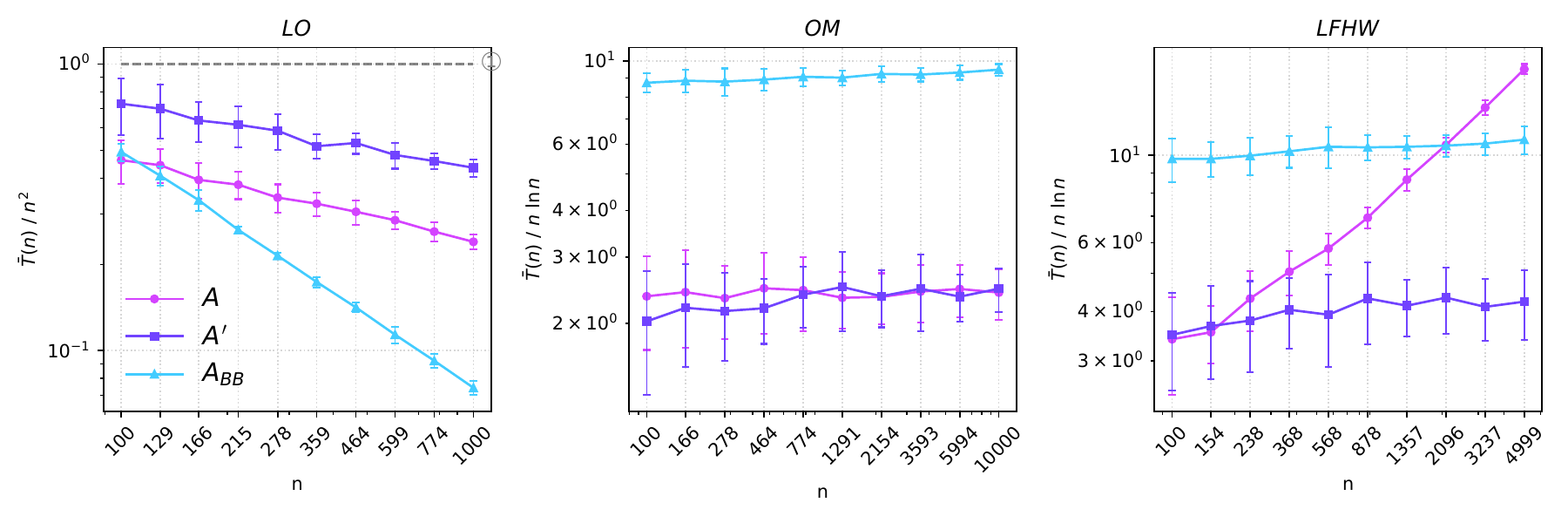}
    \caption{Log-log plot of normalized $\bar T(n)$ for \dega-variants on \leadingones (left), \onemax (middle) and \textit{linear functions} with harmonic weights (right). For $A$ we again used $\lambda = n^{2/3}$.}
    \label{fig:dega-variants-comparison}
\end{figure}

For another variant $A_{\text{BB}}$, we use an idea proposed in the context of the black-box complexity of \leadingones~\cite{doerr2011faster} in order to speed up the exploitation phases. The original $A$ requires $\EE[T_{\text{improve}}] = \Theta(\lambda)$ \LO-evaluations per improving bit.   \cite{doerr2011faster} instead uses uniform crossover between $x^1$ and $x^2$, where $x^1$ is the strictly weaker point, until a fitter candidate $y$ is found. The same procedure is then applied between $x^1$ and $y$ to further update $y$. This is iterated $O(\log n)$ times, until $x^1$ and $y$ are Hamming neighbours. Here, we obtain $A_{\text{BB}}$ from $A'$ by replacing the lower right box in~\Cref{fig:A'} by a loop of $10\log n$ uniform crossovers between $x^1$ and $y$, where the offspring replaces $y$ if it is fitter than $x^1$, and $y$ replaces $x^1$ in the end. \smallskip

\noindent \textbf{Benchmark Comparison} \label{sec:experiments:benchmarks}
In the following, unless specified otherwise, we will work with $\lambda = n^{2/3}$ for the \dega. For the \tpoga crossover is performed with $p_c = 1/2$ to obtain $y$. With $1 - p_c$ we let $y$ be a copy of a random parent. Then, $y$ is mutated to generate offspring $y'$. For the \ollga we work with $k = \lambda = \sqrt{\log n}$, which yields an expected runtime of $O(n\sqrt{\log n})$ on \onemax  ~\cite{doerr2015black}. The true asymptotic optimum involves a slightly larger $\lambda$~ \cite{doerr2021self} that gives only negligible additional speedup for the range of $n$ that we consider. 
For the \UMDA~\cite{muhlenbein1997equation} we work with $\lambda = \sqrt n \log n, \mu = \log n$. Unsurprisingly, the \dega (algorithm $A$) outperforms the other algorithms on \leadingones (\Cref{fig:benchmark-comparison}). The black-box version  $A_{\text{BB}}$ is by far the most efficient on $\leadingones$ but is not generally the best choice on other benchmarks (\Cref{fig:dega-variants-comparison,fig:benchmark-comparison}). 
\begin{figure*}[t]   
  \centering
  \includegraphics[width=\textwidth]{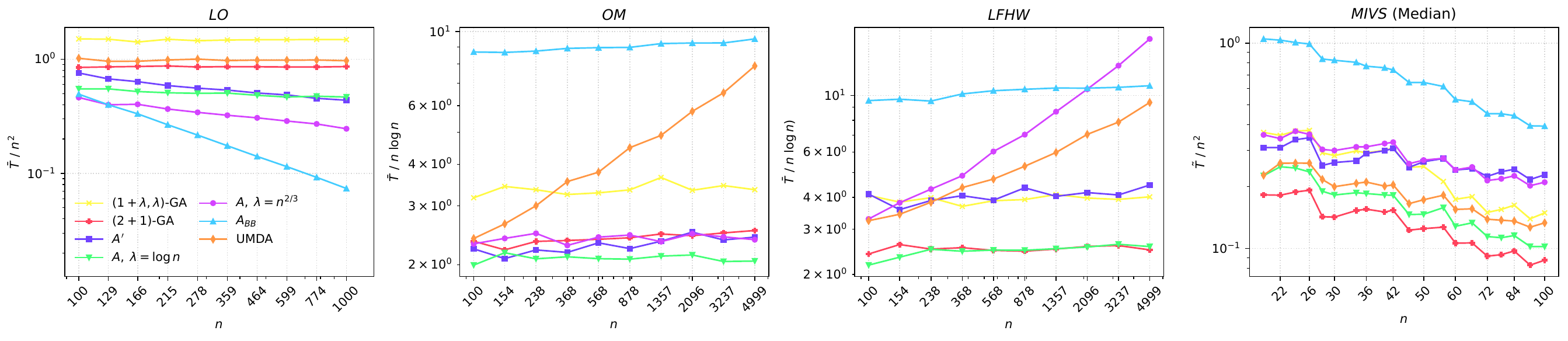}
  \caption{Log–log plot of normalized mean runtime for \leadingones\ (left),
           \onemax\ (middle-left) and linear functions with harmonic weights (middle-right). We use median runtime for MIVS (log-log, right). Mean for MIVS can be found in the supplement. We omit the standard deviation here to keep the
           graphic readable; it is reported for \dega\ in
           \Cref{fig:lambda-asymptotics,fig:dega-variants-comparison}. Recall that we limit the number of exploration phases of the \dega on MIVS, but not for $\LO$, $\OM$ or LFHW. }
  \label{fig:benchmark-comparison}
\end{figure*}
For \onemax, defined as $\onemax(x) = \sum_{i=1}^nx_i$, the algorithm performs better for smaller $\lambda$, see \Cref{fig:benchmark-comparison}. The same figure shows a similar result for \textit{linear functions with harmonic weights} (LFHW), where the fitness function is defined as $f\!: x \mapsto \sum_i i\cdot x_i$. 

From the experiments we observe that a good choice for $\lambda$ depends heavily on the function to be optimized. But we also notice that, at least for the benchmarks studied, there usually exists a $\lambda$ that provides a boost in performance. We also observe that the $A'$ adaptation is a good compromise between performance and applicability (see \Cref{fig:benchmark-comparison}), gaining an asymptotic speed-up on $\LO$ without losing much on $\OM$ and LFHW. $A'$ is outperformed by $A, \lambda = \log n$ on $\OM$ and LFHW, so studying other self-adapting strategies for $\lambda$ would be of great interest. Notice that smaller $\lambda$'s make $A$ more closely resemble a standard \tpoga. One may think that we therefore just get the \tpoga runtime for a small $\lambda$, like $\log n$. But this is not the case. The $\log n$ variant of the \dega is the fastest at optimizing $\OM$, beating the \tpoga. This may hint at some constant factor improvement. We leave the theoretical analysis of the \dega on $\OM$ to future work. As a last benchmark we consider the maximum independent vertex set (MIVS) problem. 

\subsubsection*{MIVS}

The benchmarks considered up to this point  exhibit relatively smooth fitness landscapes in which crossover can exploit small improvements. Also, we have not considered a benchmark where local maxima are difficult to escape. For $\LO$, $\OM$ and LFHW we can always find the global optimum if there is at least some mutation from time to time (mutation of one bit is enough). 
To test DEGA beyond such friendly settings, we chose the \emph{Maximum Independent Vertex Set} (MIVS) problem.  
The instance we use is the graph~\textsf{F22} from the PBO suite of \textsc{IOHprofiler}~[32]. 
For even~$n$ the graph admits a \emph{unique} optimum of size $\tfrac{n}{2}+1$.  
Because nearly every 0/1 flip upsetting a maximal independent set decreases fitness, all algorithms rapidly climb to a local optimum and then face an extremely rugged plateau.   Crossover provides no obvious shortcut here; hence MIVS is deliberately \emph{not} tailored to DEGA and serves as a stress test for its robustness. For $A$ we limit the overall number of function evaluations in a single exploitation phase to $\lambda \log n$. This is necessary to allow at least some mutation from time to time. This threshold and all other parameters were chosen ad hoc, to avoid unfair parameter tuning versus the other algorithms. \smallskip

\noindent \textbf{\emph{MIVS} testing procedure}
\label{sec:experiments:benchmarks:mivs}
Global convergence is practically infeasible for large~$n$, so we benchmark \emph{time-to-target} instead of time-to-optimum. For each $n$ we run a $(1+1)$–EA (not part of the comparison) for $3n\log n$ evaluations, repeating this $1000$~times (as the problem sizes are pretty small). The average best fitness obtained in these runs defines the target fitness $t(n)$ for the other algorithms. Each candidate algorithm is then executed $100$~times and stopped as soon as it attains $\text{round}(t(n)-1/2)$, or after a cap of $u(n)=30n\log n$ evaluations.  Unreached runs are counted with the capped time~$u(n)$. 
We report both the mean runtime (with truncation) and the median runtime, as the latter is less sensitive to the few runs that never hit the target within the budget. We use the same parameters for the \ollga, \UMDA as for the other benchmarks. \smallskip

\noindent \textbf{\emph{MIVS} results.}
Figure~\ref{fig:benchmark-MIVS} summarises the outcomes. We observe that the DEGA (Algorithm $A$) with $\lambda = \log n$ remains competitive with the $(2+1)$-GA and UMDA, but the larger $\lambda = n^{2/3}$ and the variant $A'$ perform worse, though it is unclear whether the difference is asymptotic or ``only'' a constant facor of $\approx 3$. Likely they waste too many function evaluations when there are two independent sets of different fitnesses.

\setcounter{figure}{5}
\begin{figure}[ht]
  \centering
  \includegraphics[width=\linewidth]{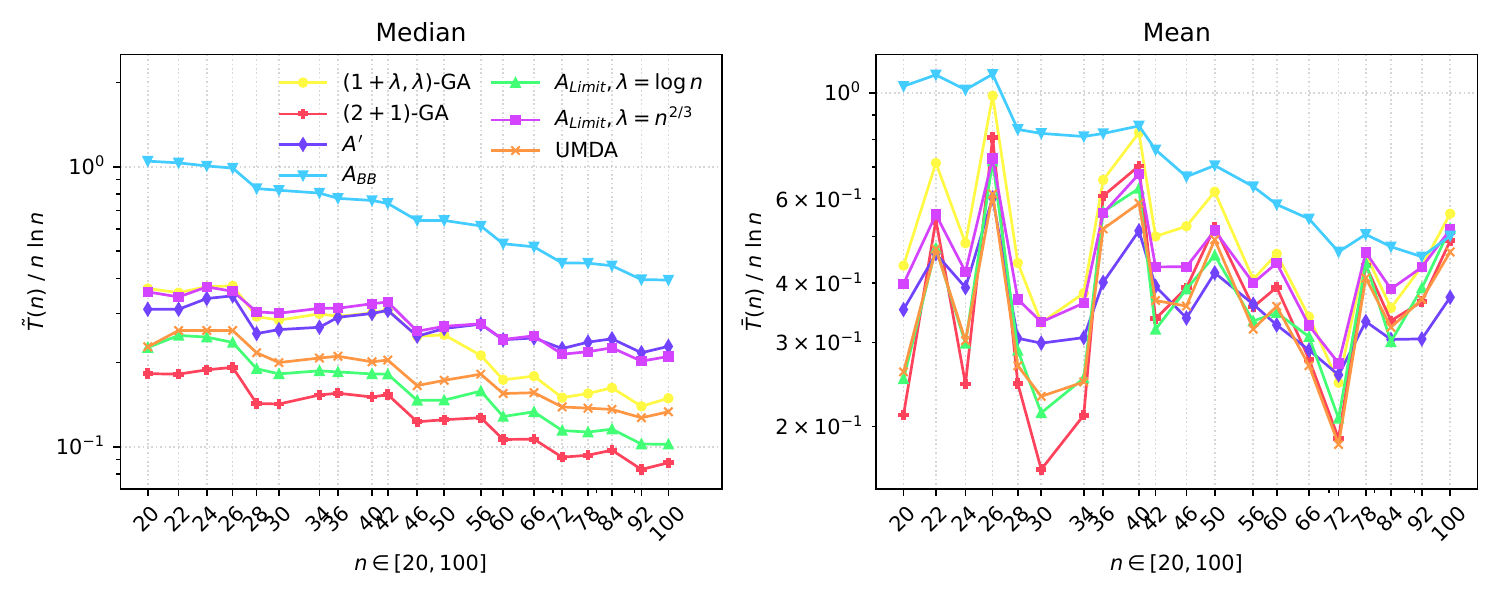}
  \caption{Median \& mean time to reach the adaptive MIVS target (log-log)}
  \label{fig:benchmark-MIVS}
\end{figure}

\subsubsection*{Conclusion on experiments.} Our experiments show that for small $\lambda$ the \dega behaves (in the worst case) very much like the well-known \tpoga. But there are problems like \onemax, where the \dega with small $\lambda$ slightly outperforms the \tpoga. There is no \dega variation that dominates across the benchmark suite. We do not claim that \dega is universally best; instead, we show that it is competitive and robust. Hence it may be a good addition to algorithm portfolios. On \leadingones it is clearly superior. 
This invites a closer look at the structural properties that make \leadingones and \onemax particularly suitable for \dega.

\section{Conclusion and Future Work}\label{sec:conclusion}
We have introduced a new paradigm of diversity-preserving exploitation of crossover (DiPEC), and given an algorithm based on this paradigm, the Diversity Exploitation Genetic Algorithm \dega. We believe that we have provided enough evidence to justify exploring the algorithm and the paradigm further in future work. Obvious next steps are to test them on a wider set of benchmarks, and to explore further the different variations of the \dega. Some important open questions include:
\begin{itemize}
    \item Do the amendments of the \dega work in practice? (E.g., adaptive choice of $\lambda$, continuous fitness functions.)
    \item How does the \dega perform for larger population sizes?
    \item Which features of a fitness landscape make the \dega faster than other algorithms, and which make it slower?
\end{itemize}
A particularly interesting question is how to integrate uniform crossover into the \dega. There may be fitness landscapes where uniform crossover offspring are preferable over biased ones, with \jump and possibly MIVS as examples. The \dega is designed to preserve diversity. Hence, the algorithm avoids on purpose accepting uniform crossover offspring. This is because every such offspring comes at a large cost for the diversity, and simply accepting them with a normal rate would defy the idea behind the \dega. For example, we believe that this would decrease performance on \leadingones strongly. However, not accepting uniform crossover offspring at all seems an extreme choice. It is open how to optimally balance the objectives of sometimes accepting uniform crossover offspring, and of keeping the damage for diversity in check. A similar trade-off is yet to be found for fitness plateaus. The \dega is conservative and accepts crossover offspring only if they give a strict fitness improvement. But this limits its ability to perform random walks on plateaus. Here, a better trade-off is desirable.

\bibliographystyle{abbrv}
\bibliography{refs}

\begin{thebibliography}{10}

\bibitem{afshani2019query}
P.~Afshani, M.~Agrawal, B.~Doerr, C.~Doerr, K.~G. Larsen, and K.~Mehlhorn.
\newblock The query complexity of a permutation-based variant of mastermind.
\newblock {\em Discrete Applied Mathematics}, 260:28--50, 2019.

\bibitem{cerf2024population}
S.~Cerf and J.~Lengler.
\newblock How population diversity influences the efficiency of crossover.
\newblock In {\em Parallel Problem Solving from Nature (PPSN 2024)}, pages 102--116, Cham, 2024. Springer, Springer.

\bibitem{corus2020benefits}
D.~Corus and P.~Oliveto.
\newblock On the benefits of populations for the exploitation speed of standard steady-state genetic algorithms.
\newblock {\em Algorithmica}, 82(12):3676--3706, 2020.

\bibitem{vcrepinvsek2013exploration}
M.~{\v{C}}repin{\v{s}}ek, S.-H. Liu, and M.~Mernik.
\newblock Exploration and exploitation in evolutionary algorithms: A survey.
\newblock {\em ACM computing surveys (CSUR)}, 45(3):1--33, 2013.

\bibitem{dang2017escaping}
D.-C. Dang, T.~Friedrich, T.~K{\"o}tzing, M.~S. Krejca, P.~K. Lehre, P.~S. Oliveto, D.~Sudholt, and A.~M. Sutton.
\newblock Escaping local optima using crossover with emergent diversity.
\newblock {\em IEEE Transactions on Evolutionary Computation}, 22(3):484--497, 2017.

\bibitem{doerr2020probabilistic}
B.~Doerr.
\newblock Probabilistic tools for the analysis of randomized optimization heuristics.
\newblock In {\em Theory of evolutionary computation: Recent developments in discrete optimization}, pages 1--87. Springer, Cham, 2020.

\bibitem{doerr2015black}
B.~Doerr, C.~Doerr, and F.~Ebel.
\newblock From black-box complexity to designing new genetic algorithms.
\newblock {\em Theoretical Computer Science}, 567:87--104, 2015.

\bibitem{doerr2021self}
B.~Doerr, C.~Doerr, and J.~Lengler.
\newblock Self-adjusting mutation rates with provably optimal success rules.
\newblock {\em Algorithmica}, 83(10):3108--3147, 2021.

\bibitem{doerr2024runtime}
B.~Doerr, A.~Echarghaoui, M.~Jamal, and M.~S. Krejca.
\newblock Runtime analysis of the ($\mu$+ 1) ga: Provable speed-ups from strong drift towards diverse populations.
\newblock In {\em AAAI Conference on Artificial Intelligence (AAAI 2024)}, volume~38, pages 20683--20691, New York, 2024. ACM.

\bibitem{doerr2011faster}
B.~Doerr, D.~Johannsen, T.~K{\"o}tzing, P.~K. Lehre, M.~Wagner, and C.~Winzen.
\newblock Faster black-box algorithms through higher arity operators.
\newblock In {\em Foundations of Genetic Algorithms (FOGA 2011)}, pages 163--172, New York, 2011. ACM.

\bibitem{doerr2020significance}
B.~Doerr and M.~Krejca.
\newblock Significance-based estimation-of-distribution algorithms.
\newblock {\em IEEE Transactions on Evolutionary Computation}, 24(6):1483--1490, 2020.

\bibitem{doerr2017fast}
B.~Doerr, H.~P. Le, R.~Makhmara, and T.~D. Nguyen.
\newblock Fast genetic algorithms.
\newblock In {\em Genetic and Evolutionary Computation Conference (GECCO 2017)}, pages 777--784, New York, 2017. ACM.

\bibitem{doerr2012black}
B.~Doerr and C.~Winzen.
\newblock Black-box complexity: Breaking the o (n log n) barrier of leadingones.
\newblock In {\em Artificial Evolution (EA 2011)}, pages 205--216, Cham, 2012. Springer.

\bibitem{doerr2017introducing}
C.~Doerr and J.~Lengler.
\newblock Introducing elitist black-box models: When does elitist behavior weaken the performance of evolutionary algorithms?
\newblock {\em Evolutionary Computation}, 25(4):587--606, 2017.

\bibitem{doerr2018elitist}
C.~Doerr and J.~Lengler.
\newblock The (1+ 1) elitist black-box complexity of leadingones.
\newblock {\em Algorithmica}, 80(5):1579--1603, 2018.

\bibitem{droste2002analysis}
S.~Droste, T.~Jansen, and I.~Wegener.
\newblock On the analysis of the (1+1) {Evolutionary Algorithm}.
\newblock {\em Theoretical Computer Science}, 276(1-2):51--81, 2002.

\bibitem{jansen2002analysis}
Jansen and Wegener.
\newblock The analysis of evolutionary algorithms—a proof that crossover really can help.
\newblock {\em Algorithmica}, 34:47--66, 2002.

\bibitem{jansen2005real}
T.~Jansen and I.~Wegener.
\newblock Real royal road functions—where crossover provably is essential.
\newblock {\em Discrete Applied Mathematics}, 149(1-3):111--125, 2005.

\bibitem{joag1983negative}
K.~Joag-Dev and F.~Proschan.
\newblock Negative association of random variables with applications.
\newblock {\em The Annals of Statistics}, 11(1):286--295, 1983.

\bibitem{kotzing2011crossover}
T.~K{\"o}tzing, D.~Sudholt, and M.~Theile.
\newblock How crossover helps in pseudo-boolean optimization.
\newblock In {\em Genetic and Evolutionary Computation Conference (GECCO 2011)}, pages 989--996, New York, 2011. ACM.

\bibitem{lehre2012black}
P.~K. Lehre and C.~Witt.
\newblock Black-box search by unbiased variation.
\newblock {\em Algorithmica}, 64(4):623--642, 2012.

\bibitem{opris2024tight}
J.~Lengler, A.~Opris, and D.~Sudholt.
\newblock {A Tight $O(4^k/p_c)$ Runtime Bound for a $(\mu+ 1)$ GA on $\text{Jump}_k$ for Realistic Crossover Probabilities}.
\newblock In {\em Genetic and Evolutionary Computation Conference (GECCO 2024)}, pages 1605--1613, New York, 2024. ACM.

\bibitem{lengler_diversity}
J.~Lengler, A.~Opris, and D.~Sudholt.
\newblock Analysing equilibrium states for population diversity.
\newblock {\em Algorithmica}, 86(9):2317--2351, 2024.

\bibitem{muhlenbein1997equation}
H.~M{\"u}hlenbein.
\newblock The equation for response to selection and its use for prediction.
\newblock {\em Evolutionary Computation}, 5(3):303--346, 1997.

\bibitem{rudolph1997convergence}
G.~Rudolph.
\newblock {\em Convergence properties of evolutionary algorithms}.
\newblock Verlag Dr. Kova{\v{c}}, Hamburg, 1997.

\bibitem{sudholt2012crossover}
D.~Sudholt.
\newblock Crossover speeds up building-block assembly.
\newblock In {\em Genetic and Evolutionary Computation Conference (GECCO 2012)}, pages 689--702, New York, 2012. ACM.

\bibitem{sudholt2017crossover}
D.~Sudholt.
\newblock How crossover speeds up building block assembly in genetic algorithms.
\newblock {\em Evolutionary computation}, 25(2):237--274, 2017.

\bibitem{sudholt2020benefits}
D.~Sudholt.
\newblock The benefits of population diversity in evolutionary algorithms: a survey of rigorous runtime analyses.
\newblock In {\em Theory of evolutionary computation: Recent developments in discrete optimization}, pages 359--404. Springer, Cham, 2020.

\bibitem{sutton2021fixed}
A.~M. Sutton.
\newblock Fixed-parameter tractability of crossover: steady-state gas on the closest string problem.
\newblock {\em Algorithmica}, 83:1138--1163, 2021.

\bibitem{watson2001analysis}
R.~A. Watson.
\newblock Analysis of recombinative algorithms on a non-separable building-block problem.
\newblock In {\em Foundations of Genetic Algorithms (FOGA 2001)}, pages 69--89. Springer, Cham, 2001.

\bibitem{watson2007building}
R.~A. Watson and T.~Jansen.
\newblock A building-block royal road where crossover is provably essential.
\newblock In {\em Genetic and Evolutionary Computation Conference (GECCO 2007)}, pages 1452--1459, New York, 2007. ACM.

\bibitem{whitley2019next}
D.~Whitley.
\newblock Next generation genetic algorithms: a user’s guide and tutorial.
\newblock In {\em Handbook of Metaheuristics}, pages 245--274. Springer, Cham, 2019.

\bibitem{whitley2018exploration}
D.~Whitley, S.~Varadarajan, R.~Hirsch, and A.~Mukhopadhyay.
\newblock Exploration and exploitation without mutation: solving the jump function in time.
\newblock In {\em Parallel Problem Solving from Nature (PPSN 2018)}, pages 55--66, Cham, 2018. Springer.

\end{thebibliography}

\end{document}